\newcommand\MYhyperrefoptions{bookmarks=true,bookmarksnumbered=true,
pdfpagemode={UseOutlines},plainpages=false,pdfpagelabels=true,
colorlinks=true,linkcolor={black},citecolor={black},urlcolor={black},
pdftitle={Self-Adaptive Training: Bridging Supervised and Self-Supervised Learning},
pdfsubject={Machine Learning},
pdfauthor={Lang Huang},
pdfkeywords={Self-Supervised Learning, Generalization, Robust Learning under Noise}}
\newtheorem{proposition}{Proposition}
\newtheorem{corollary}{Corollary}
\newcommand{\p}{\bm{p}}
\newcommand{\x}{\bm{x}}
\newcommand{\y}{\bm{y}}
\newcommand{\X}{\mathbf{X}}
\newcommand{\A}{\mathbf{A}}
\newcommand{\D}{\mathbf{D}}
\newcommand{\I}{\mathbf{I}}
\newcommand{\V}{\mathbf{V}}
\DeclareMathOperator{\argmax}{argmax}
\newcommand{\norm}[1]{\left\lVert#1\right\rVert}
\begin{document}
%
\title{Self-Adaptive Training: Bridging Supervised and Self-Supervised Learning}
%
%
%
%

\author{Lang~Huang,
        Chao~Zhang,
        and~Hongyang~Zhang
\IEEEcompsocitemizethanks{
\IEEEcompsocthanksitem The material in this paper was presented in part at Thirty-fourth Conference on Neural Information Processing Systems, December 2020~\cite{huang2020self}.
\IEEEcompsocthanksitem L.~Huang is with the Department of Information \& Communication Engineering, The University of Tokyo, Tokyo, Japan.
\IEEEcompsocthanksitem C.~Zhang is with the Key Laboratory of Machine Perception (MOE), School of Intelligence Science and Technology, Peking University, Beijing, China.
\IEEEcompsocthanksitem H.~Zhang is with the David R. Cheriton School of Computer Science, University of Waterloo, Waterloo, Canada.
\IEEEcompsocthanksitem E-mail address: laynehuang@pku.edu.cn, c.zhang@pku.edu.cn, and hongyang.zhang@uwaterloo.ca}
\thanks{Manuscript received Month Date, Year; revised Month Date, Year.}}

\markboth{Journal of \LaTeX\ Class Files,~Vol.~XX, No.~XX, September~2022}%
{Shell \MakeLowercase{\textit{et al.}}: Self-Adaptive Training: Bridging Supervised and Self-Supervised Learning}
%



\IEEEtitleabstractindextext{%
\begin{abstract}
We propose self-adaptive training---a unified training algorithm that dynamically calibrates and enhances training processes by model predictions without incurring an extra computational cost---to advance both supervised and self-supervised learning of deep neural networks. We analyze the training dynamics of deep networks on training data that are corrupted by, e.g., random noise and adversarial examples. Our analysis shows that model predictions are able to magnify useful underlying information in data and this phenomenon occurs broadly even in the absence of \emph{any} label information, highlighting that model predictions could substantially benefit the training processes: self-adaptive training improves the generalization of deep networks under noise and enhances the self-supervised representation learning. The analysis also sheds light on understanding deep learning, e.g., a potential explanation of the recently-discovered double-descent phenomenon in empirical risk minimization and the collapsing issue of the state-of-the-art self-supervised learning algorithms. Experiments on the CIFAR, STL, and ImageNet datasets verify the effectiveness of our approach in three applications: classification with label noise, selective classification, and linear evaluation. To facilitate future research, the code has been made publicly available at \url{https://github.com/LayneH/self-adaptive-training}.
\end{abstract}

\begin{IEEEkeywords}
Deep Learning, Supervised Learning, Self-Supervised Learning, Generalization, Robust Learning under Noise.
\end{IEEEkeywords}
}

\maketitle

\IEEEdisplaynontitleabstractindextext

%
\IEEEpeerreviewmaketitle

\ifCLASSOPTIONcompsoc
\IEEEraisesectionheading{\section{Introduction}\label{sec:introduction}}
\else
\section{Introduction}
\label{sec:introduction}
\fi

%
%
%
%

 
\IEEEPARstart{D}{eep} neural networks have received significant attention in machine learning and computer vision, in part due to their impressive performance achieved by supervised learning approaches in the ImageNet challenge. With the help of massive labeled data, deep neural networks advance the state-of-the-art to an unprecedented level on many fundamental tasks, such as image classification~\cite{simonyan2014very,he2016deep}, object detection~\cite{girshick2014rich}, and semantic segmentation~\cite{long2015fully}. However, data acquisition is notoriously costly, error-prone, and even infeasible in certain cases. Furthermore, deep neural networks suffer significantly from overfitting in these scenarios. On the other hand, the great success of self-supervised pre-training in natural language processing (e.g., GPT~\cite{radford2018improving,radford2019language,brown2020language} and BERT~\cite{devlin2019bert}) highlights that learning universal representations from unlabeled data can be even more beneficial for a broad range of downstream tasks.

Regarding this, much effort has been devoted to learning representations without human supervision for computer vision. Several recent studies show promising results and largely close the performance gap between supervised and self-supervised learning. To name a few, the contrastive learning approaches~\cite{he2020momentum,chen2020simple,caron2020unsupervised} solve the instance-wise discrimination task~\cite{wu2018unsupervised} as a proxy objective of representation learning. Extensive studies demonstrate that self-supervisedly learned representations are generic and even outperform the supervised pre-trained counterparts when they are fine-tuned on certain downstream tasks.

Our work advances both supervised learning and self-supervised learning settings. Instead of designing two distinct algorithms for each learning paradigm separately, in this paper, we explore the possibility of a unified algorithm that bridges supervised and self-supervised learning. Our exploration is based on two observations on the learning of deep neural networks.

\begin{figure*}[t]
    \centering
    \begin{subfigure}{\textwidth}
        \centering
        \includegraphics[width=\textwidth]{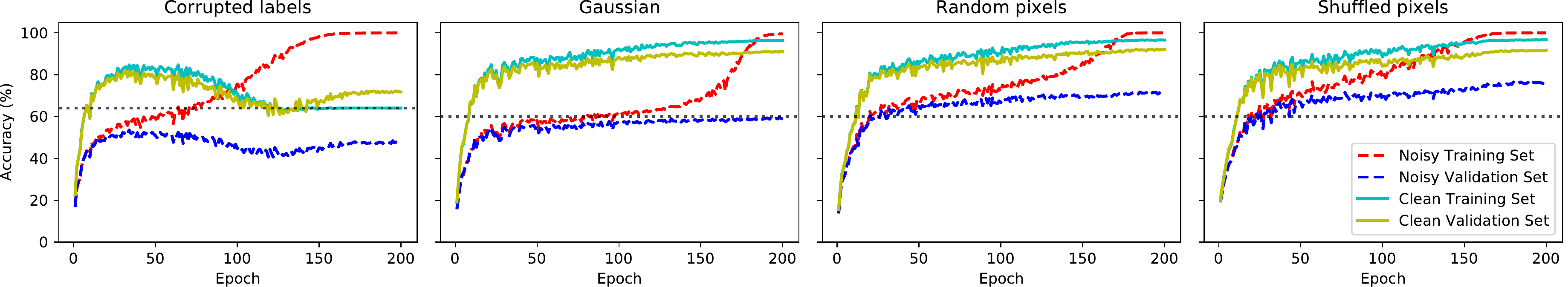}
        \caption{Accuracy curves of the model trained by ERM.}
        \label{fig:ce_acc_curve}
    \end{subfigure}
    \vskip 0.1in
    \begin{subfigure}{\textwidth}
        \centering
        \includegraphics[width=\textwidth]{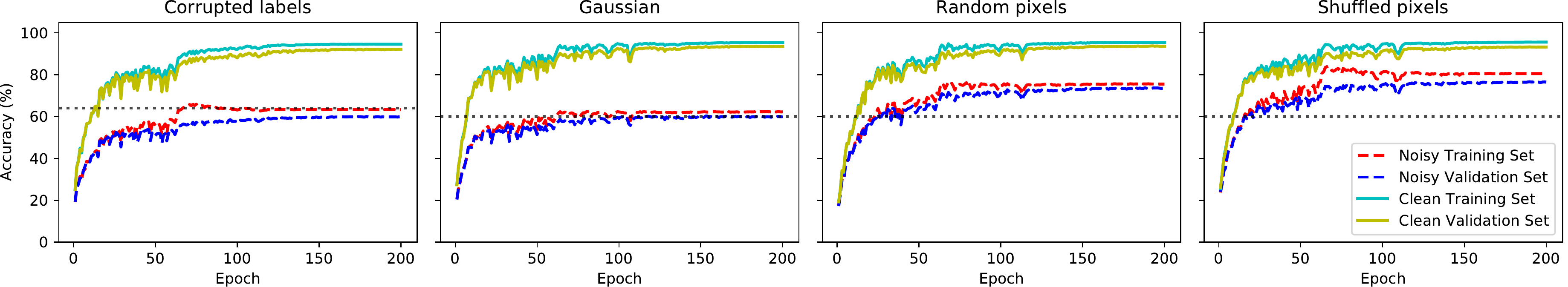}
        \caption{Accuracy curves of the model trained by our method.}
        \label{fig:wsc_acc_curve}
    \end{subfigure}
    \caption{
    Accuracy curves of models on the CIFAR10 dataset with 40\% of corrupted data.
    All models can only have access to the noisy training set (corresponding to the red dashed curve) during training and are tested on the other three sets for illustration.
    The horizontal dotted line displays the percentage of clean data in the training sets.
    It shows that while ERM suffers from overfitting to noise, our method enjoys improved generalization and higher validation accuracy
    (e.g., the improvement is larger than 10\% in the left-most column).
    }
    \label{fig:acc_curve}
\end{figure*}

\medskip
\noindent\textbf{Observation I:}\quad
Under various \emph{supervised learning} settings with noisy data, deep neural networks are able to magnify useful underlying information of data by their predictions.

\medskip
\noindent\textbf{Observation II:}\quad
Observation I extends to the extreme noise where the supervised signals are completely random, which is equivalent to \emph{self-supervised learning}.

These two observations indicate that model predictions can magnify useful information in data, which further indicates that incorporating predictions into the training processes could significantly benefit model learning. With this in mind, we propose \emph{self-adaptive training}---a carefully designed approach that dynamically uses model predictions as a guiding principle in the design of training algorithms---that bridges supervised learning and self-supervised learning in a unified framework.
Our approach is conceptually simple yet calibrates and significantly enhances the learning of deep models in multiple ways.

\subsection{Summary of our contributions}
\label{sec:contribution}

Self-adaptive training sheds light on understanding and improving the learning of deep neural networks.

\begin{itemize}
    \item We analyze the Empirical Risk Minimization (ERM) training processes of deep models on four kinds of corruptions (see Fig.~\ref{fig:ce_acc_curve}). We describe the failure scenarios of ERM and observe that useful information from data has been distilled to model predictions in the first few epochs. We show that this phenomenon occurs broadly even in the absence of any label information (see Fig.~\ref{fig:ssl_acc_at_training}). These insights motivate us to propose self-adaptive training---a unified training algorithm for both supervised and self-supervised learning---to improve the learning of deep neural networks by dynamically incorporating model predictions into training, requiring no modification to existing network architecture and incurring almost no extra computational cost.

    \item We show that self-adaptive training improves the generalization of deep networks under both label-wise and instance-wise random noise (see Fig.~\ref{fig:acc_curve}~and~\ref{fig:gen_clean_errs}). Besides, self-adaptive training exhibits a single-descent error-capacity curve (see Fig.~\ref{fig:double_descent}). This is in sharp contrast to the recently-discovered double-descent phenomenon in ERM, which might be a result of overfitting to noise. Moreover, while adversarial training may easily overfit adversarial noise, our approach mitigates the overfitting issue and improves the adversarial accuracy by $\sim$3\% over the state-of-the-art (see Fig.~\ref{fig:robust_acc}).

    \item Self-adaptive training questions and alleviates the dependency of recent self-supervised algorithms on the dominant training mechanism that typically involves multiple augmented views of the same images at each training step: self-adaptive training achieves remarkable performance despite requiring only a single view of each image for training, which significantly reduces the heavy cost of data pre-processing and model training on extra views.

\end{itemize}

Self-adaptive training has three applications and advances the state-of-the-art by a significant gap.
\begin{itemize}
    \item Learning with noisy labels, where the goal is to improve the performance of deep networks on clean test data in the presence of training label noise. On the CIFAR datasets, our approach obtains up to 9\% absolute classification accuracy improvement over the state-of-the-art. On the ImageNet dataset, our approach improves over ERM by 3\% under 40\% noise rate.
    
    \item Selective classification, which trades prediction coverage for classification accuracy. Our approach achieves up to 50\% relative improvement over the state-of-the-art on three datasets under various coverage rates.
    
    \item Linear evaluation, which evaluates the representations of a self-supervised pre-trained model using a linear classifier. Our approach performs on par with or even better than the state-of-the-art on various datasets.
    In particular, on the ImageNet dataset, self-adaptive training achieves 72.8\% top1 linear accuracy using only 200 pre-training epochs, surpassing the other methods by 2.2\% in absolute under the same setting.
\end{itemize}
\section{Self-Adaptive Training}
\label{sec:sat}

\subsection{Blessing of model predictions}
\label{sec:Corrupted data}

\noindent\textbf{On corrupted data}\quad 
Recent works~\cite{zhang2016understanding,NIPS2019_9336} cast doubt on the ERM training: techniques such as uniform convergence might be unable to explain the generalization of deep neural networks because ERM easily overfits the training data even though the training data are partially or completely corrupted by random noise.
To take a closer look at this phenomenon, we conduct the experiments on the CIFAR10 dataset~\cite{krizhevsky2009cifar}, splitting the original training data into a training set (consists of the first 45,000 data pairs) and a validation set (consists of the last 5,000 data pairs). We measure four random noise schemes according to prior work~\cite{zhang2016understanding}, where the data are \emph{partially} corrupted with probability $p$:
1)~\emph{Corrupted labels}. Labels are assigned uniformly at random;
2)~\emph{Gaussian}. Images are replaced by random Gaussian samples with the same mean and standard deviation as the original image distribution;
3)~\emph{Random pixels}. Pixels of each image are shuffled using independent random permutations;
4)~\emph{Shuffled pixels}. Pixels of each image are shuffled using a fixed permutation pattern. We consider the performance on both the noisy and the clean sets (i.e., the original uncorrupted data), while the models can only have access to the noisy training sets.

Fig.~\ref{fig:ce_acc_curve} displays the accuracy curves of ERMs that are trained on the noisy training sets under four kinds of random corruptions: ERM easily overfits noisy training data and achieves nearly perfect training accuracy. However, the four subfigures exhibit very different generalization behaviors which are indistinguishable if we only look at the accuracy curve on the noisy training set (the red curve).
In Fig.~\ref{fig:ce_acc_curve}, the accuracy increases in the early stage, and the generalization errors grow quickly only after a certain number of epochs. Intuitively, early-stopping improves the generalization in the presence of label noise (see the first column in Fig. \ref{fig:ce_acc_curve}); however, it remains unclear how to properly identify such an epoch without using validation data. Moreover, the early-stop mechanism may significantly hurt the performance on the clean validation sets, as we can see in the last three columns of Fig.~\ref{fig:ce_acc_curve}.
Our approach is motivated by the failure scenarios of ERM and goes beyond ERM. We begin by making the following observations in the leftmost subfigure of
Fig.~\ref{fig:ce_acc_curve}: the peak of the accuracy curve on the clean training set (>80\%) is much higher than the percentage of clean data in the noisy training set ($\sim$60\%). This finding was also previously reported by~\cite{rolnick2017deep,guan2018said,li2019gradient} under label corruption and suggested that model predictions might be able to magnify useful underlying information in data. We confirm this finding and show that the pattern occurs under various kinds of corruption more broadly (see the last three subfigures of Fig.~\ref{fig:ce_acc_curve}).

\medskip
\noindent\textbf{On unlabelled data}\quad
\label{sec:sat_ssl_prelim}
We notice that supervised learning with 100\% noisy labels is equivalent to unsupervised learning if we simply discard the meaningless labels. Therefore, it would be interesting to analyze how deep models behave in such an extreme case. Here, we conduct experiments on the CIFAR10~\cite{krizhevsky2009cifar} dataset and consider two kinds of random noise as the training (real-valued) targets for deep learning models: 1) the output features of another model on the same training images, where the model is randomly initialized and then frozen; 2) random noise that is drawn i.i.d. from standard Gaussian distribution and then fixed.
The training of the deep model is then formulated as minimizing the mean square error between $\ell_2$-normalized model predictions and these two kinds of random noise. To monitor the training, we learn a linear classifier on the top of each model to evaluate its representation at each training epoch (see Appendix~\ref{sec:setup_online_lin_cls} for the detailed setup of this classifier).

\begin{figure}[t]
    \centering
    \includegraphics[width=.8\linewidth]{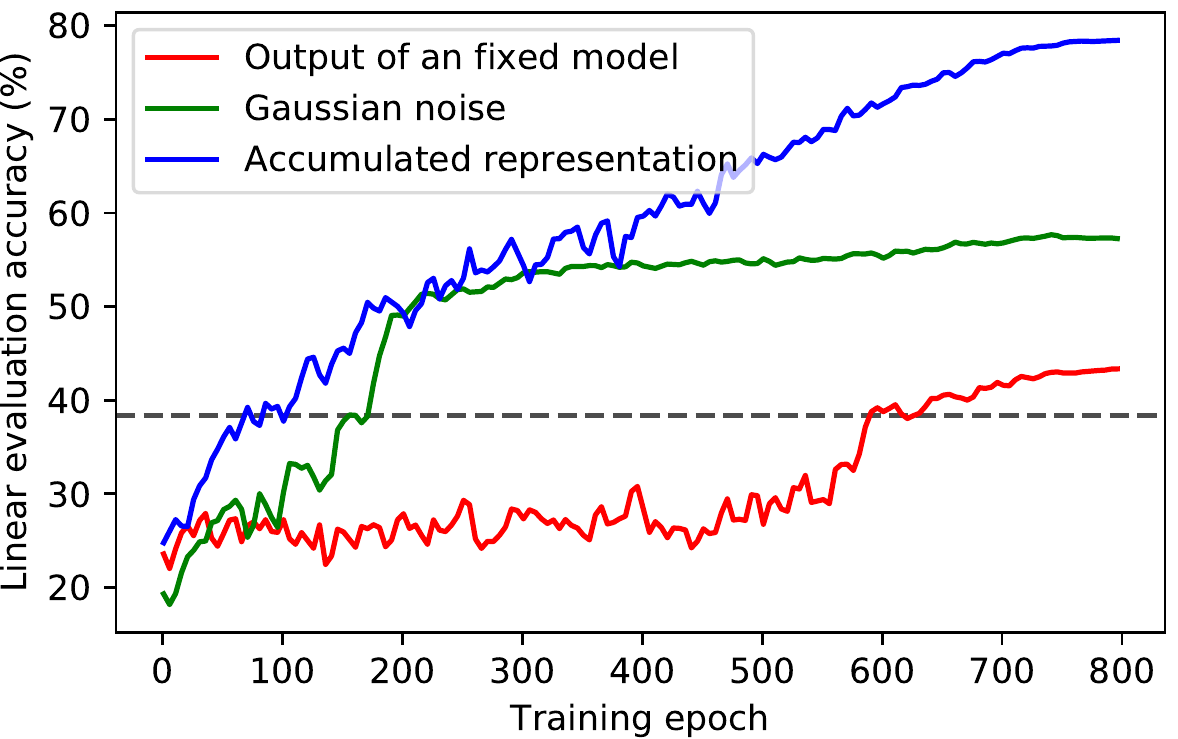}
    \caption{Linear evaluation accuracy on CIFAR10 of checkpoints of the model trained by fitting the noise as described in Sec.~\ref{sec:sat_ssl_prelim} (the red and green curves) and fitting the accumulated representation of the model (the blue curve). The horizontal dashed line indicates the accuracy of linear evaluation directly using a randomly initialized network. All the curves are smoothed for better demonstration.}
    \label{fig:ssl_acc_at_training}
\end{figure}

Fig.~\ref{fig:ssl_acc_at_training} shows the linear evaluation accuracy curves of models trained on these two kinds of noise, i.e., the green and red curves (the explanation of the blue curve is deferred to Sec.~\ref{sec:sat_ssl_instantiation}). We observe that, perhaps surprisingly, the model trained by predicting fixed random Gaussian noise (the green curve) achieves 57\% linear evaluation accuracy, which is substantially higher than the 38\% accuracy of a linear classifier trained on top of a randomly initialized network (the dashed horizontal line). This intriguing observation shows that deep neural networks are able to distill underlying information from data to its predictions, even when supervision signals are completely replaced by random noise. Furthermore, although predicting the output of another network can also improve the representation learning (the red curve), its performance is worse than that of the second scheme. We hypothesize that this might be a result of inconsistent training targets: the output of a network depends on the input training image, which is randomly transformed by the data augmentation at each epoch. This suggests that the consistency of training targets should also be taken into account in the design of the training algorithm.

\medskip
\noindent\textbf{Inspiration for our methodology}\quad
Based on our analysis, we propose a unified algorithm, \emph{Self-Adaptive Training}, for both supervised and self-supervised learning. Self-adaptive training incorporates model predictions to augment the training processes of deep models in a \emph{dynamic} and \emph{consistent} manner. Our methodology is general and very effective: self-adaptive training significantly improves (a) the generalization of deep models on the corrupted data, and (b) the representation learning of deep models without human supervision.

\subsection{Meta algorithm: Self-Adaptive Training}

Given a set of training images $\{\x_i\}_n$ and a deep network $f_{\theta}(\cdot)$ parametrized by $\theta$, our approach records training targets $\{\bm{t}_i\}_n$ for all data points accordingly. We first obtain the predictions of the deep network as
\begin{align}
    \p_i & = \rho(f_{\theta}(\x_i)),
\end{align}
where $\rho(\cdot)$ is a normalization function. Then, the training targets track all historical model predictions during training and are updated by Exponential-Moving-Average (EMA) scheme as
\begin{align}
\label{eq:meta_ema}
    \bm{t}_{i} & \leftarrow \alpha\times \bm{t}_{i} + (1 - \alpha)\times \p_i.
\end{align}
The EMA scheme in Equation~\eqref{eq:meta_ema} alleviates the instability issue of model predictions, smooths out $\bm{t}_i$ during the training process, and enables our algorithm to completely change the training labels if necessary. The momentum term $\alpha$ controls the weight of the model predictions.
Finally, we can update the weights $\theta$ of the deep network $f_{\theta}$ by Stochastic Gradient Descent (SGD) on the loss function $\mathcal{L}(\p_i, \bm{t}_i; f_{\theta})$ at each training iteration.

We summarize the meta algorithm of self-adaptive training in Algorithm~\ref{alg:sat_meta_alg}. The algorithm is conceptually simple, flexible, and has three components adapting to different learning settings: 1) the training \emph{targets initialization}; 2) \emph{normalization function} $\rho(\cdot)$; 3) \emph{loss function} $\mathcal{L}(\p_i, \bm{t}_i; f_{\theta})$. In the following sections, we will elaborate on the instantiations of these components for specific learning settings.

\begin{algorithm}
\small
\caption{Self-Adaptive Training}
\label{alg:sat_meta_alg}
\begin{algorithmic}[1]
    \REQUIRE Data $\{\x_i\}_n$, deep network $f_{\theta}(\cdot)$ parametrized by $\theta$, momentum term $\alpha$, normalization function $\rho(\cdot)$
    \STATE Initialize targets $\{\bm{t}_{i}\}_n$
    \REPEAT
      \STATE Fetch mini-batch data $\{(\x_i, \bm{t}_{i})\}_m$
      \FOR{$i=1$ {\bfseries to} $m$  (in parallel)}
        \STATE $\p_i = \rho(f_{\theta}(\x_i))$
        \STATE $\bm{t}_{i} \leftarrow \alpha\times \bm{t}_{i} + (1 - \alpha)\times \p_i$
      \ENDFOR
      \STATE Update $f_{\theta}$ by SGD on $\mathcal{L}(\p_i, \bm{t}_i; f_{\theta})$
    \UNTIL{end of training}
\end{algorithmic}
\end{algorithm}

\medskip
\noindent\textbf{Convergence analysis}\quad
\label{sec:convergence}
To simplify the analysis, we consider a linear regression problem with data $\{\x_i\}_n$, training targets $\{t_i\}_n$ and a linear model $f_{\bm{\theta}}(\x)=\x\bm{\theta}$, where $\x_i\in \mathbb{R}^{d}$, $t_i\in \mathbb{R}$ and $\theta\in\mathbb{R}^{d}$. Let $\X=[\x_1|\x_2|\cdots|\x_n]\in\mathbb{R}^{n\times d}$, $\bm{t}=[t_1|t_2|\cdots|t_n]\in\mathbb{R}^{n}$. Then the optimization for this regression problem (corresponding to $\min_{\bm{\theta}} \mathcal{L}(\p_i, \bm{t}_i; f_{\theta})$ in Algorithm~\ref{alg:sat_meta_alg}) can be written as 
\begin{align}
\label{eq:lin_reg}
    \arg\min_{\bm{\theta}} \norm{\X\bm{\theta} - \bm{t}}_2^2.
\end{align}

Let $\bm{\theta}^{(k)}$ and $\bm{t}^{(k)}$ be the model parameters and training targets at the $k$-th training step, respectively. Let $\eta$ denote the learning rate for gradient descent update. The Algorithm~\ref{alg:sat_meta_alg} alternatively minimizes the problem~\eqref{eq:lin_reg} over $\bm{\theta}^{(k)}$ and $\bm{t}^{(k)}$ as
\begin{align}
\begin{split}
\label{eq:theta_k}
    \bm{\theta}^{(k)} &= \bm{\theta}^{(k-1)} - \eta\nabla f_{\bm{\theta}^{(k-1)}}(\X) \\
     &= \bm{\theta}^{(k-1)} - \eta\X^{\intercal}(\X\bm{\theta}^{(k-1)} - \bm{t}^{(k-1)}),
\end{split}
\end{align}
\begin{align}
\begin{split}
\label{eq:t_k}
    \bm{t}^{(k)} &= \alpha\bm{t}^{(k-1)} + (1-\alpha)f_{\bm{\theta}^{(k)}}(\X) \\
     & = \alpha\bm{t}^{(k-1)} + (1-\alpha)\X\bm{\theta}^{(k)}.
\end{split}
\end{align}

\begin{proposition}
\label{prop:conver}
Let $d_{\mathrm{max}}$ be the maximal eigenvalue of the matrix $\X\X^{\intercal}$, if the learning rate $\eta < \frac{\alpha+1}{\alpha d_{\mathrm{max}}}$, then
\begin{align}
    \lim_{k\rightarrow\infty} \norm{\X\bm{\theta}^{(k)} - \bm{t}^{(k)}}_2^2 = 0, 
\end{align}
\end{proposition}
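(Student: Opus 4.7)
The plan is to reformulate the dynamics in terms of the residual $\bm{r}^{(k)} = \X\bm{\theta}^{(k)} - \bm{t}^{(k)}$ and show that it satisfies a linear contraction. First, plugging the $\bm{\theta}$-update~\eqref{eq:theta_k} into $\X\bm{\theta}^{(k)}$ yields $\X\bm{\theta}^{(k)} = \X\bm{\theta}^{(k-1)} - \eta\,\X\X^{\intercal}\bm{r}^{(k-1)}$, so that
\begin{align*}
\X\bm{\theta}^{(k)} - \bm{t}^{(k-1)} = (\I - \eta\,\X\X^{\intercal})\bm{r}^{(k-1)}.
\end{align*}
Substituting this identity into the $\bm{t}$-update~\eqref{eq:t_k} gives $\bm{r}^{(k)} = \alpha\bigl(\X\bm{\theta}^{(k)} - \bm{t}^{(k-1)}\bigr) = \alpha(\I - \eta\,\X\X^{\intercal})\bm{r}^{(k-1)}$, and by induction $\bm{r}^{(k)} = \bigl[\alpha(\I - \eta\,\X\X^{\intercal})\bigr]^{k}\bm{r}^{(0)}$.

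Next I would bound the spectral radius of the iteration matrix $\A = \alpha(\I - \eta\X\X^{\intercal})$. Because $\X\X^{\intercal}$ is symmetric positive semidefinite with eigenvalues in $[0, d_{\mathrm{max}}]$, each eigenvalue $\lambda$ of $\X\X^{\intercal}$ gives an eigenvalue $\alpha(1-\eta\lambda)$ of $\A$. A two-case analysis shows $|\alpha(1-\eta\lambda)| \le \alpha$ whenever $\eta\lambda \le 1$, and $|\alpha(1-\eta\lambda)| \le \alpha(\eta d_{\mathrm{max}} - 1)$ whenever $\eta\lambda > 1$. The hypothesized step-size bound $\eta < (\alpha+1)/(\alpha d_{\mathrm{max}})$ rearranges exactly to $\alpha(\eta d_{\mathrm{max}} - 1) < 1$, while the momentum interpretation $\alpha < 1$ handles the first case. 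Thus the spectral radius of $\A$ is strictly below $1$, and since $\A$ is symmetric (hence normal) one has $\|\A^{k}\|_{2} = \rho(\A)^{k} \to 0$, which yields $\|\bm{r}^{(k)}\|_{2}^{2} \to 0$ as desired.

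The main obstacle is modest and lies in the elementary sign/case analysis on $|1-\eta\lambda|$, which must be organized so that the prescribed stepsize threshold is simultaneously large enough to permit meaningful gradient steps yet small enough that any overshoot $\eta\lambda-1$ is still damped by $\alpha$. A subtle point worth flagging in the write-up is that the argument implicitly assumes $\alpha<1$; the boundary case $\alpha=1$ would additionally require $\X\X^{\intercal}$ to be strictly positive definite, since otherwise its zero eigenspace is undamped and the corresponding component of $\bm{r}^{(k)}$ persists. Under $0 \le \alpha < 1$ no rank assumption on $\X$ is needed, and convergence is geometric with ratio $\max\bigl(\alpha,\;\alpha(\eta d_{\mathrm{max}}-1)\bigr) < 1$.
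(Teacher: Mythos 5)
Your proof is correct and takes essentially the same route as the paper's: both reduce the dynamics to the one-step linear recursion with iteration matrix $\alpha(\I - \eta\X\X^{\intercal})$ and conclude by checking that the step-size condition places every eigenvalue $\alpha(1-\eta\lambda)$ strictly inside $(-1,1)$, the only difference being that the paper first diagonalizes and tracks the auxiliary variables $\V\bm{t}^{(k)}$ and $\V\X\bm{\theta}^{(k)}$ before subtracting, whereas you derive the residual recursion $\bm{r}^{(k)} = \alpha(\I-\eta\X\X^{\intercal})\bm{r}^{(k-1)}$ directly, which is a touch cleaner. Your closing remark also correctly makes explicit the assumption $\alpha < 1$ (and the degenerate behavior of the zero eigenspace when $\alpha = 1$) that the paper uses only implicitly via ``$\alpha\in(0,1)$''.
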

\begin{corollary}
\label{corol:conver_rate}
Under the same condition as in Proposition~\ref{prop:conver}, we have
\begin{align}
    \lim_{k\rightarrow\infty} \frac{\norm{\X\bm{\theta}^{(k+1)} - \bm{t}^{(k+1)}}_2^2}{\norm{\X\bm{\theta}^{(k)} - \bm{t}^{(k)}}_2^2} < 1,
\end{align}
\end{corollary}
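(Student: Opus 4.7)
The plan is to reduce the coupled dynamics in \eqref{eq:theta_k}--\eqref{eq:t_k} to a single linear recursion on the residual $\bm{r}^{(k)} := \X\bm{\theta}^{(k)} - \bm{t}^{(k)}$. Substituting \eqref{eq:t_k} into the definition of $\bm{r}^{(k)}$ and cancelling the $(1-\alpha)\X\bm{\theta}^{(k)}$ term gives $\bm{r}^{(k)} = \alpha\bigl(\X\bm{\theta}^{(k)} - \bm{t}^{(k-1)}\bigr)$. Using \eqref{eq:theta_k} to expand $\X\bm{\theta}^{(k)} = \X\bm{\theta}^{(k-1)} - \eta\X\X^{\intercal}\bm{r}^{(k-1)}$ then yields
\begin{align*}
\bm{r}^{(k)} \;=\; \A\,\bm{r}^{(k-1)}, \qquad \A \;:=\; \alpha(\I - \eta\X\X^{\intercal}).
\end{align*}
This is the single identity on which the whole argument rests; it is essentially the same computation that must underlie the proof of Proposition~\ref{prop:conver}.

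Next I would bound the spectral radius of $\A$. Since $\X\X^{\intercal}$ is symmetric positive semidefinite with eigenvalues $d_i \in [0, d_{\mathrm{max}}]$, $\A$ is symmetric with real eigenvalues $\lambda_i = \alpha(1-\eta d_i)$. The function $d \mapsto \alpha|1 - \eta d|$ is piecewise linear, so its maximum on $[0, d_{\mathrm{max}}]$ is attained at an endpoint. At $d = 0$ the value is $\alpha < 1$, using the standard EMA assumption $\alpha \in (0,1)$; at $d = d_{\mathrm{max}}$ the hypothesis $\eta < (\alpha+1)/(\alpha d_{\mathrm{max}})$ rearranges to $\alpha(\eta d_{\mathrm{max}} - 1) < 1$, while $\alpha(1 - \eta d_{\mathrm{max}}) \le \alpha < 1$ is trivial. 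Combining both endpoints gives $|\lambda_i| < 1$ for every $i$, so $\rho(\A) < 1$.

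Finally, because $\A$ is symmetric it can be diagonalized in an orthonormal basis $\{\bm{v}_i\}$. Writing $\bm{r}^{(0)} = \sum_i c_i \bm{v}_i$, the recursion gives $\bm{r}^{(k)} = \sum_i c_i \lambda_i^k \bm{v}_i$ and $\norm{\bm{r}^{(k)}}_2^2 = \sum_i c_i^2 \lambda_i^{2k}$. A routine dominant-eigenvalue calculation (factor out $(\lambda^{\star})^{2k}$ from numerator and denominator, where $\lambda^{\star} := \max\{|\lambda_i| : c_i \neq 0\}$, and let $k \to \infty$) then yields
\begin{align*}
\lim_{k\to\infty} \frac{\norm{\bm{r}^{(k+1)}}_2^2}{\norm{\bm{r}^{(k)}}_2^2} \;=\; (\lambda^{\star})^2 \;\le\; \rho(\A)^2 \;<\; 1,
\end{align*}
which simultaneously establishes existence of the limit and strict inequality.

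I do not expect a serious technical obstacle: the argument is essentially finite-dimensional linear algebra once the residual recursion is identified. The most delicate step is the endpoint case analysis used to verify $\rho(\A) < 1$, which quietly uses $\alpha \in (0,1)$ to rule out $|\lambda_i| = 1$ at the $d_i = 0$ eigenvalues; the only other caveat is to assume $\bm{r}^{(0)} \neq 0$ (so the ratio is defined) or interpret the vacuous case $\bm{r}^{(k)} \equiv 0$ separately.
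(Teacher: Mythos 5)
Your proof is correct and follows essentially the same route as the paper: both reduce the dynamics to the residual recursion $\bm{r}^{(k)} = \alpha(\I - \eta\X\X^{\intercal})\bm{r}^{(k-1)}$ (the paper derives the equivalent $\X\bm{\theta}^{(k)} - \bm{t}^{(k)} = \V^{\intercal}\A^{k}\V\bm{b}$ in the eigenbasis), verify $\rho(\A) < 1$ from the learning-rate condition, and obtain the limiting ratio by a dominant-eigenvalue computation. If anything, your version is slightly more rigorous than the paper's: by defining $\lambda^{\star}$ over eigendirections with \emph{nonzero} coefficient and flagging the degenerate case $\bm{r}^{(0)} = \bm{0}$, you close two small gaps that the paper leaves implicit (it factors out the globally largest $|a_i|$ and tacitly assumes $(\V\bm{b})_i \neq 0$ for that index).
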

Combining Proposition~\ref{prop:conver} and Corollary~\ref{corol:conver_rate}, we see that, with a proper learning rate for gradient descent, the optimization of problem~\eqref{eq:lin_reg} converges at least $Q$-linearly to 0.
The proofs of Proposition~\ref{prop:conver} and Corollary~\ref{corol:conver_rate} are presented in Appendix~\ref{sec:proof}.

\medskip
\noindent\textbf{Limitations}\quad
We note that one disadvantage of self-adaptive training is the extra storage cost of the training targets, which might be a problem when training on an extremely large dataset. However, we find this cost is actually moderate in most realistic cases. Take the large-scale ImageNet dataset~\cite{deng2009imagenet} as an example. The ImageNet consists of about 1.2 million images categorized into 1,000 classes. The storage of such vectors in the single precision format for the entire dataset requires $1.2 \times 10^6 \times 1000 \times 32$ bit $\approx 4.47$GB, which is reduced to $\sim$$1.12$GB under the self-supervised learning setting that records a $256$-d feature for each image. The cost is acceptable since modern GPUs usually have 20GB or more dedicated memory, e.g., NVIDIA Tesla A100 has 40GB of memory. Moreover, the vectors can be (a) sharded to all GPU devices~\cite{FairScale2021}, (b) stored on the CPU memory, or even (c) stored on the disk and loaded along with the images, to further reduce the storage cost.

\begin{figure*}[t]
\centering
\includegraphics[width=\textwidth]{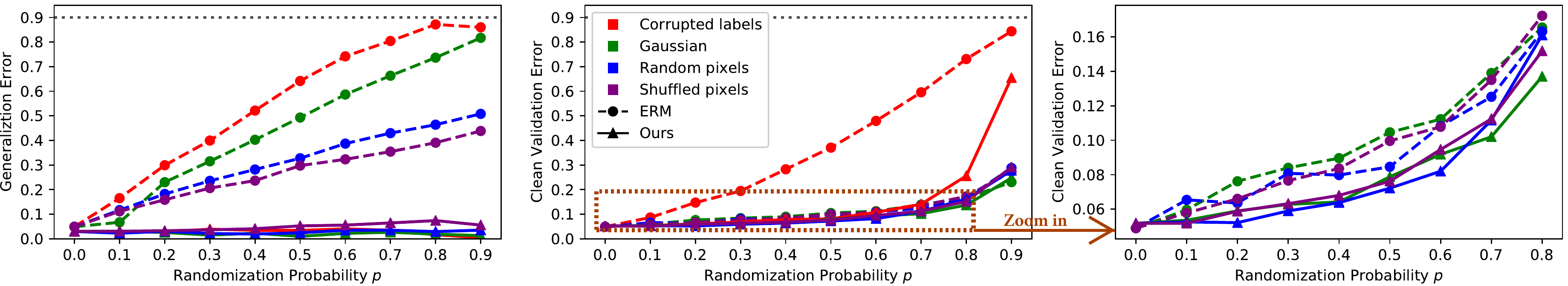}
\caption{
Generalization error and clean validation error under four kinds of random noise (represented by different colors) for ERM (the dashed curves) and our approach (the solid curves) on the CIFAR10 dataset. We zoom in on the dashed rectangle region and display it in the third column for a clear demonstration.
}
\label{fig:gen_clean_errs}
\end{figure*}

\section{Improved Generalization of Deep Models}
\label{sec:approach}

\subsection{Supervised Self-Adaptive Training}
\label{sec:sat_supervised_inst}

\noindent\textbf{Instantiation}\quad
We consider a $c$-class classification problem and denote the images by $\x_i$, labels by $\y_i\in \{0,1\}^c, \y_i^\intercal \mathbf{1}=1$. Given a data pair $(\x_i, \y_i)$, our approach instantiates the three components of meta Algorithm~\ref{alg:sat_meta_alg} for supervised learning as follows:
\begin{enumerate}
    \item \emph{Targets initialization}. Since the labels are provided, the training target $\bm{t}_i$ is directly initialized as $\bm{t}_i \leftarrow \y_i$.
    \item \emph{Normalization function}. We use the softmax function $\mathrm{softmax}(\bm{u})_j = \exp^{\bm{u}_j} / \sum_k \exp^{\bm{u}_k}$
    to normalize the model predictions into probability vectors $\bm{p}_i = \mathrm{softmax}(f_{\theta}(\x_i))$, such that $\bm{p}_i\in [0,1]^c, \bm{p}_i^\intercal \mathbf{1}=1$.
    \item \emph{Loss function}. Following the common practice in supervised learning, the loss function is implemented as the cross entropy loss between model predictions $\p_i$ and training targets $\bm{t}_i$, i.e., $\sum_j \bm{t}_{i,j}~ \mathrm{log}~\p_{i,j}$, where $\bm{t}_{i,j}$ and $\p_{i,j}$ represent the $j$th entry of $\bm{t}_i$ and $\p_{i}$, respectively.
\end{enumerate}

During the training process, we fix $\bm{t}_i$ in the first $\mathrm{E}_s$ training epochs and update the training targets $\bm{t}_i$ according to Equation~\eqref{eq:meta_ema} in each following training epoch.
The number of initial epochs $\mathrm{E}_s$ allows the model to capture informative signals in the data set and excludes ambiguous information that is provided by model predictions in the early stage of training.

\medskip
\noindent\textbf{Sample re-weighting}\quad
Based on the scheme presented above,
we introduce a simple yet effective sample re-weighting scheme on each sample.
Concretely, given training target $\bm{t}_i$, we set
\begin{equation}
\label{eq:sample_weight}
    w_i = \max_{j} ~\bm{t}_{i,j}.
\end{equation}
The sample weight $w_i \in [\frac{1}{c}, 1]$
reveals the labeling confidence of this sample.
Intuitively, all samples are treated equally in the first $\mathrm{E}_s$ epochs.
As the target $\bm{t}_i$ being updated, 
our algorithm pays less attention to potentially erroneous data
and learns more from potentially clean data.
This scheme also allows the corrupted samples to re-attain attention if they are confidently corrected.

\medskip
\noindent\textbf{Putting everything together}\quad
We use stochastic gradient descent to minimize:
\begin{equation}
\label{eq:overall_loss}
    \mathcal{L}(f) = -\frac{1}{\sum_i w_i}\sum_i w_i \sum_j \bm{t}_{i,j}~\mathrm{log}~\p_{i,j}
\end{equation}
during the training process.
Here, the denominator normalizes per sample weight and stabilizes the loss scale. We summarize the \emph{Supervised Self-Adaptive Training} and display the pseudocode in Algorithm~\ref{alg:approach}.
Intuitively, the optimal choice of hyper-parameter $\mathrm{E}_s$ should be related to the epoch where overfitting occurs, which is around the $60$th epoch according to Fig.~\ref{fig:ce_acc_curve}. 
For convenience, we directly fix the hyper-parameters $\mathrm{E}_s=60$, $\alpha=0.9$ by default if not specified. Experiments on the sensitivity of our algorithm to hyper-parameters are deferred to Sec. \ref{sec:exp_label_noise}. Our approach requires no modification to existing network architecture and incurs almost no extra computational cost.

\begin{algorithm}
\small
\caption{Supervised Self-Adaptive Training}
\label{alg:approach}
\begin{algorithmic}[1]
  \REQUIRE Data $\{(\x_i, \y_i)\}_n$; classifier $f_{\theta}$; initial epoch $\mathrm{E}_s = 60$; momentum term $\alpha = 0.9$
  \STATE Initialize training targets by $\{\bm{t}_{i}\}_n \leftarrow \{\y_{i}\}_n$
    \REPEAT
      \STATE Fetch mini-batch data $\{(\x_i, \bm{t}_{i})\}_m$ at current epoch $e$
      \FOR{$i=1$ {\bfseries to} $m$  (in parallel)}
        \STATE $\p_i = \mathrm{softmax}(f_{\theta}(\x_i))$
        \IF{$e > \mathrm{E}_s$}
            \STATE $\bm{t}_{i} \leftarrow \alpha\times \bm{t}_{i} + (1 - \alpha)\times \p_i$
        \ENDIF
        \STATE $w_{i} = \max_{j} ~\bm{t}_{i,j}$
      \ENDFOR
      \STATE $\mathcal{L}(\p_i, \bm{t}_i; f_{\theta}) = -\frac{1}{\sum_i w_i}\sum_i w_i \sum_j \bm{t}_{i,j}~ \mathrm{log}~\p_{i,j}$
      \STATE Update $f_{\theta}$ by SGD on $\mathcal{L}(\p_i, \bm{t}_i; f_{\theta})$
    \UNTIL{end of training}
\end{algorithmic}
\end{algorithm}

\medskip
\noindent\textbf{Methodology differences with prior work}\quad
Supervised self-adaptive training consists of two components: a) label correction; b) sample re-weighting. With the two components, our algorithm is robust to both instance-wise and label-wise noise and is ready to combine with various training schemes such as natural and adversarial training, without incurring multiple rounds of training. In contrast, the vast majority of works on learning from corrupted data follow a preprocessing-training scheme with an emphasis on the label-wise noise only: this line of research either discards samples based on the disagreement between noisy labels and model predictions~\cite{brodley1996identifying,brodley1999identifying,zhu2003eliminating,nguyen2019self}, or corrects noisy labels~\cite{bagherinezhad2018label,tanaka2018joint}; \cite{teng1999correcting} investigated a more generic approach that corrects both label-wise and instance-wise noise. However, their approach inherently suffers from extra computational overheads.
Besides, unlike the general scheme in robust statistics~\cite{rousseeuw2005robust} and other re-weighting methods~\cite{jiang2018mentornet,ren2018learning} that used an additional optimization step to update the sample weights, our approach directly obtains the weights based on accumulated model predictions and thus is much more efficient.

\subsection{Improved generalization under random noise}
We consider the noise scheme (including noise type and noise level) and model capacity as two factors that affect the generalization of deep networks under random noise. We analyze self-adaptive training by varying one of the two factors while fixing the other.

\medskip
\noindent\textbf{Varying noise schemes}\quad
We use ResNet-34~\cite{he2016deep} and rerun the same experiments as in Fig.~\ref{fig:ce_acc_curve} by replacing ERM with our approach. In Fig.~\ref{fig:wsc_acc_curve}, we plot the accuracy curves of models trained with our approach on four corrupted training sets and compare them with those in Fig.~\ref{fig:ce_acc_curve}.
We highlight the following observations.

\begin{itemize}
\item Our approach mitigates the overfitting issue in deep networks. The accuracy curves on noisy training sets (i.e., the red dashed curves in Fig.~\ref{fig:wsc_acc_curve}) nearly converge to the percentage of clean data in the training sets and do not reach perfect accuracy.

\item The generalization errors of self-adaptive training (the gap between the red and blue dashed curves in Fig.~\ref{fig:wsc_acc_curve}) are much smaller than Fig.~\ref{fig:ce_acc_curve}. We further confirm this observation by displaying the generalization errors of the models trained on the four noisy training sets under various noise rates in the leftmost subfigure of Fig.~\ref{fig:gen_clean_errs}. The generalization errors of ERM consistently grow as we increase the injected noise level. In contrast, our approach significantly reduces the generalization errors across all noise levels from 0\% (no noise) to 90\% (overwhelming noise).

\item The accuracy on the clean sets (cyan and yellow solid curves in Fig.~\ref{fig:wsc_acc_curve}) is monotonously increasing and converges to higher values than that of the counterparts in Fig.~\ref{fig:ce_acc_curve}. We also show the clean validation errors in the right two subfigures in Fig.~\ref{fig:gen_clean_errs}. The figures show that the error of self-adaptive training is consistently much smaller than that of ERM.
\end{itemize}

\begin{figure}[t]
\centering
\includegraphics[width=.8\linewidth]{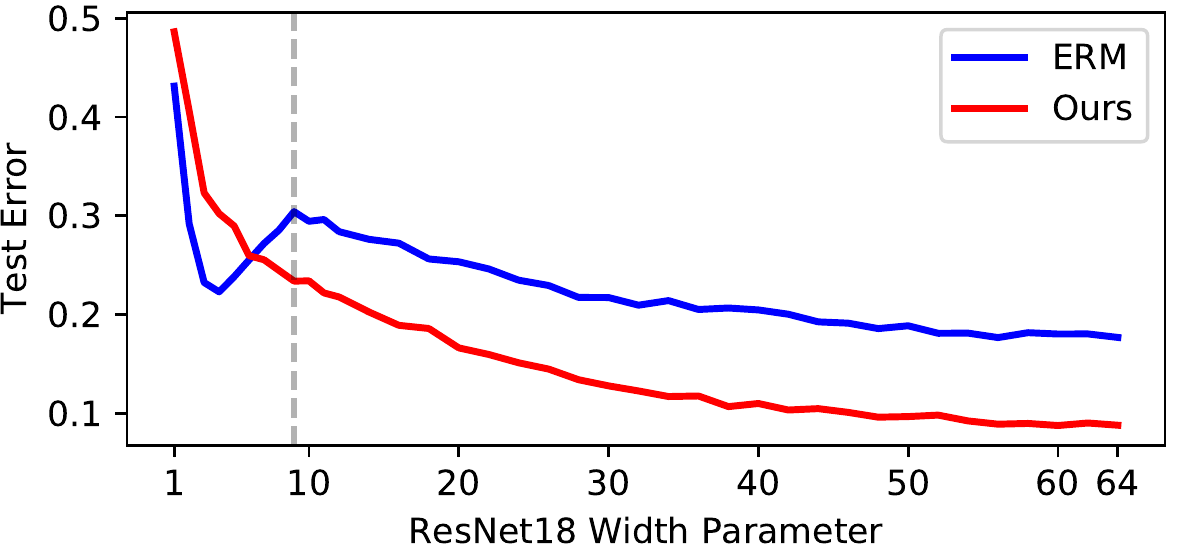}
\caption{
Double-descent ERM \emph{vs.} single-descent self-adaptive training on the error-capacity curve.
The model of width 64 corresponds to the standard ResNet-18. The vertical dashed line represents the interpolation threshold~\cite{belkin2018reconciling,nakkiran2019deep}.
}
\label{fig:double_descent}
\end{figure}

\medskip
\noindent\textbf{Varying model capacity}\quad
We notice that such analysis is related to a recently discovered intriguing phenomenon~\cite{opper1995statistical,opper2001learning,advani2017high,spigler2018jamming,belkin2018reconciling,geiger2019jamming,nakkiran2019deep} in modern machine learning models: as the capacity of the model increases, the test error initially decreases, then increases, and finally shows a second descent. This phenomenon is termed \emph{double descent} \cite{belkin2018reconciling} and has been widely observed in deep networks~\cite{nakkiran2019deep}. To evaluate the double-descent phenomenon on self-adaptive training, we follow exactly the same experimental settings as \cite{nakkiran2019deep}: we vary the width parameter of ResNet-18~\cite{he2016deep} and train the networks on the CIFAR10 dataset with 15\% of training labels being corrupted at random (details are given in Appendix~\ref{sec:setup_dd}).

Fig.~\ref{fig:double_descent} shows the test error curves. It shows that self-adaptive training overall achieves a much lower test error than that of ERM except when using extremely small models that underfit the training data. This suggests that our approach can improve the generalization of deep networks especially when the model capacity is reasonably large. Besides, we observe that the curve of ERM clearly exhibits the double-descent phenomenon, while the curve of our approach is monotonously decreasing as the model capacity increases. Since the double-descent phenomenon may vanish when label noise is absent~\cite{nakkiran2019deep}, our experiment indicates that this phenomenon may be a result of overfitting to noise and we can bypass it by a properly designed training process such as self-adaptive training.

\subsection{Improved generalization under adversarial noise}
Adversarial noise~\cite{szegedy2013intriguing} is different from random noise in that the noise is model-dependent and imperceptible to humans. We use the state-of-the-art adversarial training algorithm TRADES \cite{zhang2019theoretically} as our baseline to evaluate the performance of self-adaptive training under adversarial noise.
Algorithmically, TRADES minimizes
\begin{equation}
\label{eq:trades}
\mathbb{E}_{\x,\y}\Bigg\{ \mathrm{CE}(\p(\x), \y) + \max_{\|\widetilde{\x}-\x\|_\infty\le\epsilon}\mathrm{KL}(\p(\x), \p(\widetilde{\x}))/\lambda\Bigg\},
\end{equation}
where $\p(\cdot)$ is the model prediction, $\epsilon$ is the maximally allowed perturbation, CE stands for the cross entropy, KL stands for the Kullback–Leibler divergence, and the hyper-parameter $\lambda$ controls the trade-off between robustness and accuracy.
We replace the CE term in TRADES loss with our method.
The models are evaluated using robust accuracy $\frac{1}{n}\sum_i \mathbbm{1}\{ \argmax~\p(\widetilde{\x}_i)=\argmax~\y_i \}$, where adversarial examples $\widetilde{\x}$ are generated by white box $\ell_{\infty}$ AutoAttack~\cite{croce2020reliable} with $\epsilon$ = 0.031 (the evaluation of projected gradient descent attack \cite{madry2017towards} is given in Fig.~\ref{fig:robust_acc_pgd} of Appendix~\ref{sec:extra_exp}). We set the initial learning rate as 0.1 and decay it by a factor of 0.1 in epochs 75 and 90, respectively. 
We choose $1/\lambda=6.0$ as suggested by~\cite{zhang2019theoretically} and use $\mathrm{E}_s$ = 70, $\alpha$ = 0.9 for our approach.
Experimental details are given in Appendix~\ref{sec:setup_adv}.

We display the robust accuracy on the CIFAR10 test set after $\mathrm{E}_s$ = 70 epochs in Fig.~\ref{fig:robust_acc}. It shows that the robust accuracy of TRADES reaches its highest value around the epoch of the first learning rate decay (epoch 75) and decreases later, which suggests that overfitting might happen if we train the model without early stopping.
On the other hand, our method considerably mitigates the overfitting issue in the adversarial training and consistently improves the robust accuracy of TRADES by 1\%$\sim$3\%, which indicates that self-adaptive training can improve the generalization in the presence of adversarial noise.

\begin{figure}[t]
\centering
\includegraphics[width=.8\linewidth]{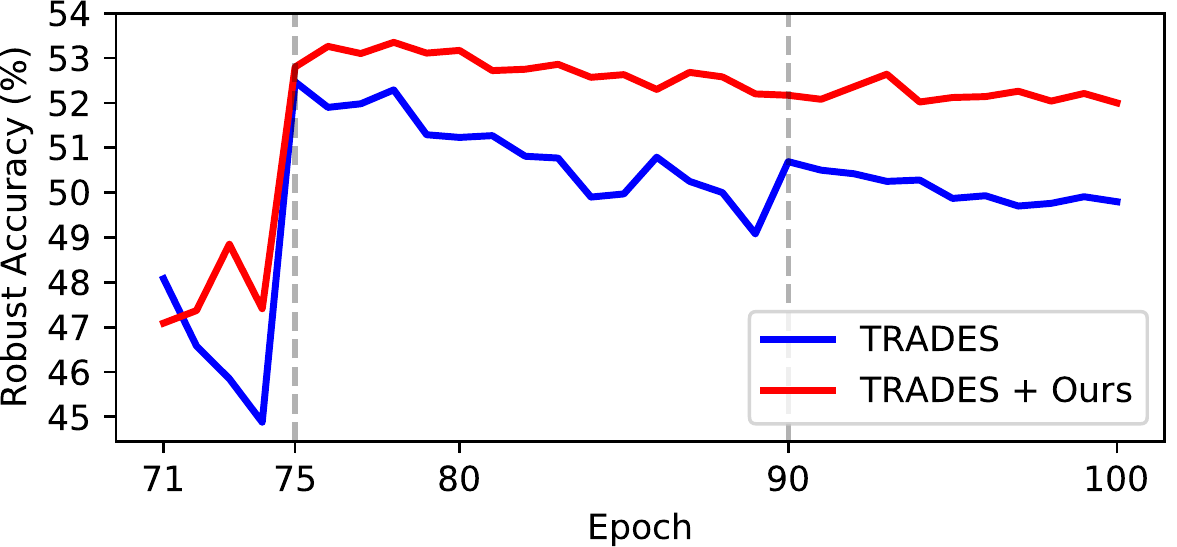}
\caption{
Robust Accuracy (\%) on the CIFAR10 test set under white box $\ell_\infty$ AutoAttack~\cite{croce2020reliable} ($\epsilon$=0.031). The plot of the first 70 epochs is omitted because the number of initial epochs $\mathrm{E}_s$ is set as 70. The vertical lines indicate learning rate decay. It shows that our method consistently improves TRADES.
}
\label{fig:robust_acc}
\end{figure}
\begin{table*}[t!]
\caption{Test Accuracy (\%) on the CIFAR datasets
with various levels of symmetric noise (SN) or asymmetric noise (AN) injected into the training labels.
We compare with previous works under exactly the same
experiment settings. It shows that, in all settings, self-adaptive training improves over the state-of-the-art by at least $1$\%, and sometimes the improvement is as significant as $9$\%. The best entries are bold-faced.
}
\label{tab:noisy_cls_all}
\begin{center}
\begin{small}
\setlength{\tabcolsep}{1.75mm}{
\begin{tabular}{llcccccccccccc}
\toprule
\multirow{2}{*}{Backbone} &  & \multicolumn{4}{c}{SN@CIFAR10} & \multicolumn{4}{c}{SN@CIFAR100}  & \multicolumn{2}{c}{AN@CIFAR10}  & \multicolumn{2}{c}{AN@CIFAR100} \\
 \cmidrule(l{3pt}r{3pt}){3-6} \cmidrule(l{3pt}r{3pt}){7-10} \cmidrule(l{3pt}r{3pt}){11-12} \cmidrule{13-14}
& Label Noise Rate  & 0.2 & 0.4 & 0.6 & 0.8 & 0.2 & 0.4 & 0.6 & 0.8 & 0.2 & 0.4 & 0.2 & 0.4  \\
\midrule
\multirow{10}{*}{ResNet-34} & ERM + Early Stop & 85.57 & 81.82 & 76.43 & 60.99 & 63.70 & 48.60 & 37.86 & 17.28 & 89.06 & 79.33 & 65.37 & 46.22 \\
& Label Smooth~\cite{szegedy2016rethinking}& 85.64 & 71.59 & 50.51 & 28.19 & 67.44 & 53.84 & 33.01 & 9.74 & - & - & - & - \\
& Forward $\hat{T}$~\cite{patrini2017making} & 87.99 & 83.25 & 74.96 & 54.64 & 39.19 & 31.05 & 19.12 & 8.99 & 89.09 & 83.55 & 42.46 & 34.44 \\
& Mixup~\cite{zhang2017mixup}                 & 93.58 & 89.46 & 78.32 & 66.32 & 69.31 & 58.12 & 41.10 & 18.77 & - & - & - & - \\
& Trunc $\mathcal{L}_q$~\cite{zhang2018gce}   & 89.70 & 87.62 & 82.70 & 67.92 & 67.61 & 62.64 & 54.04 & 29.60 & 89.33 & 76.74 & 66.59 & 47.22 \\
& Joint Opt~\cite{tanaka2018joint}            & 92.25 & 90.79 & 86.87 & 69.16 & 58.15 & 54.81 & 47.94 & 17.18 & - & - & - & - \\
& SCE~\cite{wang2019symmetric}                & 90.15 & 86.74 & 80.80 & 46.28 & 71.26 & 66.41 & 57.43 & 26.41 & 90.44 & 82.51 & 72.56 & 69.32 \\
& DAC~\cite{thulasidasan2019dac}              & 92.91 & 90.71 & 86.30 & 74.84 & 73.55 & 66.92 & 57.17 & 32.16 & - & - & - & - \\
& SELF~\cite{nguyen2019self}                  & - & 91.13 & - & 63.59 & - & 66.71 & - & 35.56 & 92.76 & 89.07 & 70.53 & 53.83  \\
& ELR~\cite{liu2020early}                     & 92.12 & 91.43 & 88.87 & \textbf{80.69} & 74.68 & 68.43 & 60.05 & 30.27 & 93.28 & 90.35 & 74.20 & 73.73 \\
& Ours    & \textbf{94.14} & \textbf{92.64} & \textbf{89.23} & 78.58 & \textbf{75.77} & \textbf{71.38} & \textbf{62.69} & \textbf{38.72} & \textbf{94.07} & \textbf{90.45} & \textbf{77.80} & \textbf{74.08} \\
\midrule
\multirow{5}{*}{WRN28-10} & ERM + Early Stop & 87.86 & 83.40 & 76.92 & 63.54 & 68.46 & 55.43 & 40.78 & 20.25 & 90.61 & 79.93 & 69.85 & 50.30 \\
& MentorNet~\cite{jiang2018mentornet} & 92.0 & 89.0 & - & 49.0 & 73.0 & 68.0 & - & 35.0 & - & - & - & - \\
& DAC~\cite{thulasidasan2019dac} & 93.25 & 90.93 & 87.58 & 70.80 & 75.75 & 68.20 & 59.44 & 34.06 & - & - & - & - \\
& SELF~\cite{nguyen2019self} & - & \textbf{93.34} & - & 67.41 & - & 72.48 & - & 42.06 & - & - & - & - \\
& Ours & \textbf{94.84} & 93.23 & \textbf{89.42} & \textbf{80.13} & \textbf{77.71} & \textbf{72.60} & \textbf{64.87} & \textbf{44.17} & \textbf{95.11} & \textbf{91.07} & \textbf{80.59} & \textbf{75.30} \\
\bottomrule
\end{tabular}
}
\end{small}
\end{center}
\end{table*}

\section{Application I: Learning with Noisy Label}
Given the improved generalization of self-adaptive training over ERM under noise, we show the applications of our approach which outperforms the state-of-the-art with a significant gap.

\subsection{Problem formulation}
Given a set of noisy training data $\{(\x_i, \widetilde{\y}_i)\}_n \in \mathcal{\widetilde{D}}$, where $\mathcal{\widetilde{D}}$ is the distribution of noisy data and
$\widetilde{\y}_i$ is the noisy label for each uncorrupted sample $\x_i$, the goal is to be robust to the label noise in the training data and improve the classification performance on clean test data that are sampled from the clean distribution $\mathcal{D}$.

\subsection{Experiments on CIFAR datasets}
\label{sec:exp_label_noise}
\noindent\textbf{Setup}\quad
We consider the cases that, with different noise rates, the labels are (a) assigned uniformly at random (i.e., the symmetric noise), and (b) flipped according to the class-conditioned rules~\cite{patrini2017making} (i.e., the asymmetric noise): for CIFAR10, the label noise is generated by mapping Truck $\rightarrow$ Automobile, Bird $\rightarrow$ Airplane, Deer $\rightarrow$ Horse, Cat $\leftrightarrow$ Dog; for CIFAR100, the noise is generated by flipping each class to the next class circularly. Following previous work~\cite{zhang2018gce,thulasidasan2019dac}, we conduct the experiments on the CIFAR10 and CIFAR100 datasets~\cite{krizhevsky2009cifar} and use the ResNet-34~\cite{he2016deep} / Wide ResNet-28-10~\cite{zagoruyko2016wide} (WRN28-10) as our base classifier. The networks are implemented on PyTorch~\cite{paszke2019pytorch} and are optimized using SGD with an initial learning rate of 0.1, a momentum of 0.9, a weight decay of 0.0005, a batch size of 256, and the total training epochs of 200. The learning rate is decayed to zero using the cosine annealing schedule~\cite{loshchilov2016sgdr}. We use the standard data augmentation with random horizontal flipping and cropping. We report the average performance over 3 trials.

\begin{table}[t]
\caption{Influence of the two components of our approach.}
\label{tab:ablation_ema_reweight}
\begin{center}
\begin{small}
\setlength{\tabcolsep}{1.5mm}{
\begin{tabular}{lcccc}
\toprule
 & \multicolumn{2}{c}{CIFAR10} & \multicolumn{2}{c}{CIFAR100}\\
 \cmidrule(l{3pt}r{3pt}){2-3} \cmidrule(l{3pt}r{3pt}){4-5}
Label Noise Rate              & 0.4   & 0.8   & 0.4   & 0.8   \\
\midrule
Ours                    & \textbf{92.64} & \textbf{78.58} & \textbf{71.38} & \textbf{38.72} \\
- Re-weighting          & 92.49 & 78.10 & 69.52 & 36.78 \\
-  Exponential Moving Average        & 72.00 & 28.17 & 50.93 & 11.57 \\
\bottomrule
\end{tabular}
}
\end{small}
\end{center}
\end{table}

\medskip
\noindent\textbf{Main results}\quad
We summarize the experiments in Table~\ref{tab:noisy_cls_all}. Most of the results are directly cited from original papers with the same experiment settings; the results of Label Smoothing~\cite{szegedy2016rethinking},  Mixup~\cite{zhang2017mixup}, Joint Opt~\cite{tanaka2018joint}, and SCE~\cite{wang2019symmetric} are reproduced by rerunning the official open-sourced implementations.
From the table, we can see that our approach outperforms the state-of-the-art methods in most entries by 1\% $\sim$ 5\% on both CIFAR10 and CIFAR100 datasets, using different backbones. We observe that the improvements by our approach are consistent under both symmetric and asymmetric noise, demonstrating the robustness of self-adaptive training against various kinds of noise. Notably, unlike the Joint Opt, DAC, and SELF methods that require multiple iterations of training, our method enjoys the same computational budget as ERM.

\begin{table}
\caption{Parameters sensitivity when label noise of 40\% is injected into the CIFAR10 training set.}
\label{tab:ablation_alpha_es}
\begin{center}
\begin{small}
\setlength{\tabcolsep}{1.5mm}{
\begin{tabular}{lccccc}
\toprule
$\alpha$ & 0.6 & 0.8 & 0.9 & 0.95 & 0.99 \\
\midrule
Fix $\mathrm{E}_s = 60$ & 90.17 & 91.91 & \textbf{92.64} & 92.54 & 84.38 \\
\midrule\midrule
$\mathrm{E}_s$ & 20 & 40 & 60 & 80 & 100 \\
\midrule
Fix $\alpha = 0.9$ & 89.58 & 91.89 & \textbf{92.64} & 92.26 & 88.83 \\
\bottomrule
\end{tabular}
}
\end{small}
\end{center}
\end{table}

\medskip
\noindent\textbf{Ablation study and hyper-parameter sensitivity}\quad
First, we report the performance of ERM equipped with the simple early stopping scheme in the first row of Table~\ref{tab:noisy_cls_all}. We observe that our approach achieves substantial improvements over this baseline. This demonstrates that simply early stopping the training process is a sub-optimal solution.
Then, we further report the influences of the two individual components of our approach: Exponential Moving Average (EMA) and sample re-weighting scheme. As displayed in Table~\ref{tab:ablation_ema_reweight}, removing any component considerably hurts the performance under all noise rates, and removing the EMA scheme leads to a significant performance drop. This suggests that properly incorporating model predictions is important in our approach.
Finally, we analyze the sensitivity of our approach to the parameters $\alpha$ and $\mathrm{E}_s$ in Table~\ref{tab:ablation_alpha_es}, where we fix one parameter while varying the other. The performance is stable for various choices of $\alpha$ and $\mathrm{E}_s$, indicating that our approach is not sensitive to hyperparameter tuning.

\subsection{Experiments on ImageNet dataset}
The work of~\cite{russakovsky2015imagenet} suggested that the ImageNet dataset \cite{deng2009imagenet} contains annotation errors on its own even after several rounds of cleaning. Therefore, in this subsection, we use ResNet-50/101~\cite{he2016deep} to evaluate self-adaptive training on the large-scale ImageNet under both the standard setup (i.e., using original labels) and the case that 40\% of the training labels are corrupted. We provide the experimental details in Appendix~\ref{sec:setup_imagenet} and report model performance on the ImageNet validation set in terms of top1 accuracy in Table~\ref{tab:imagenet}. We can see that self-adaptive training consistently improves the ERM baseline by a considerable margin under all settings using different models. Specifically, the improvement can be as large as 3\% in absolute for the larger ResNet-101 when 40\% of the training labels are corrupted. The results validate the effectiveness of our approach on the large-scale dataset and larger model.

\begin{figure}[t]
    \centering
    \includegraphics[width=.45\textwidth]{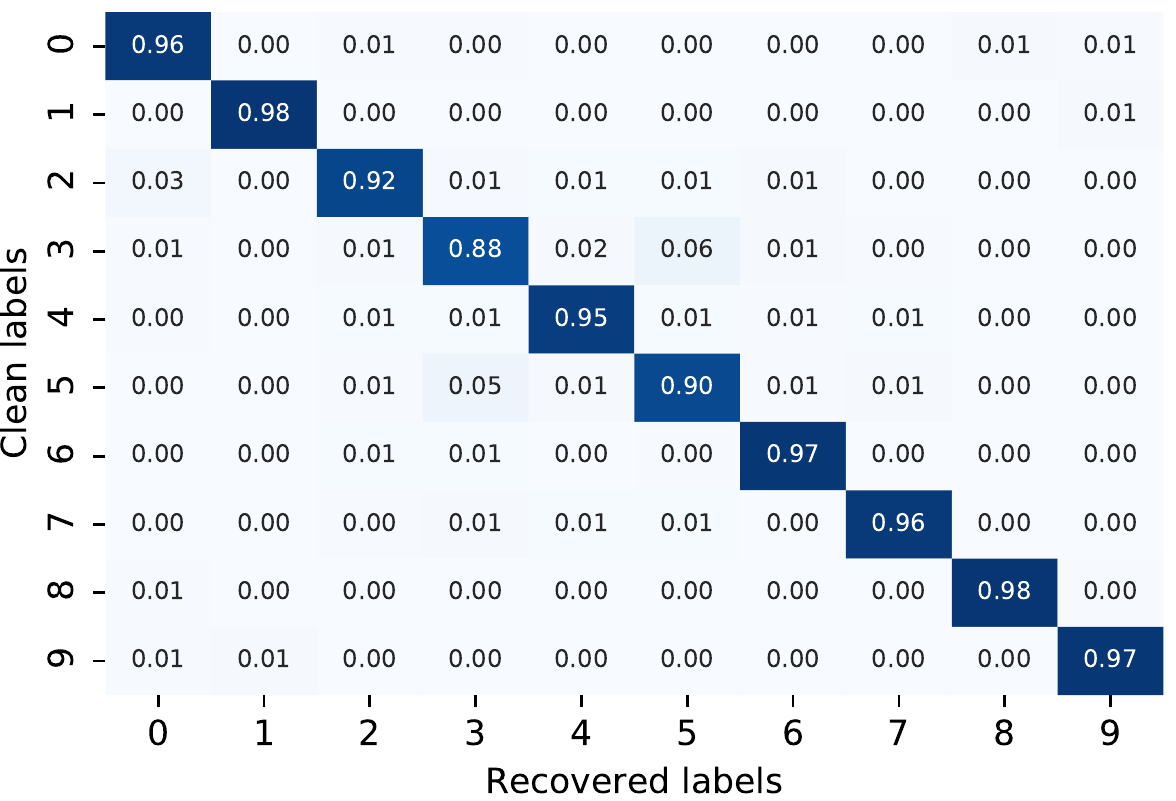}
    \caption{Confusion matrix of recovered labels w.r.t clean labels on CIFAR10 training set with 40\% of label noise. The overall recovery accuracy is 94.65\%.
    }
    \label{fig:conf_mat}
\end{figure}

\begin{table}[t]
\caption{Top1 Accuracy (\%) on ImageNet validation set.}
\label{tab:imagenet}
\begin{center}
\begin{small}
\begin{tabular}{lcccc}
\toprule
 & \multicolumn{2}{c}{ResNet-50} & \multicolumn{2}{c}{ResNet-101} \\
 \cmidrule(l{3pt}r{3pt}){2-3} \cmidrule(l{3pt}r{3pt}){4-5}
Label Noise Rate & 0.0 & 0.4 & 0.0 & 0.4 \\
\midrule
ERM     & 76.8 & 69.5 & 78.2 & 70.2 \\
Ours    & \textbf{77.2} & \textbf{71.5} & \textbf{78.7} & \textbf{73.5}  \\
\bottomrule
\end{tabular}
\end{small}
\end{center}
\end{table}

\subsection{Label recovery of self-adaptive training}
\label{sec:label_rec}
We demonstrate that our approach is able to recover the true labels from noisy training labels: we obtain the recovered labels by the moving average targets $\bm{t}_i$ and compute the recovered accuracy as $\frac{1}{n}\sum_i \mathbbm{1}\{ \argmax~\y_i = \argmax~\bm{t}_i \}$, where $\y_i$ is the clean label of each training sample. When 40\% of labels are corrupted in the CIFAR10 and ImageNet training set, our approach successfully corrects a huge amount of labels and obtains recovered accuracy of 94.6\% and 81.1\%, respectively. We also display the confusion matrix of recovered labels w.r.t the clean labels on CIFAR10 in Fig.~\ref{fig:conf_mat}, where we can see that our approach performs well for all classes.

\subsection{Investigation of sample weights}
We further inspect the re-weighting scheme of self-adaptive training. Following the procedure in Section~\ref{sec:label_rec}, we display the average sample weights in Fig.~\ref{fig:weights}. In the figure, the $(i, j)$th block contains the average weight of samples with clean label $i$ and recovered label $j$, the white areas represent the case that no sample lies in the cell. We see that the weights on the diagonal blocks are clearly higher than those on non-diagonal blocks. The figure indicates that, aside from its impressive ability to recover the correct labels, self-adaptive training could properly down-weight the noisy examples.

\begin{figure}[t]
\centering
\includegraphics[width=.45\textwidth]{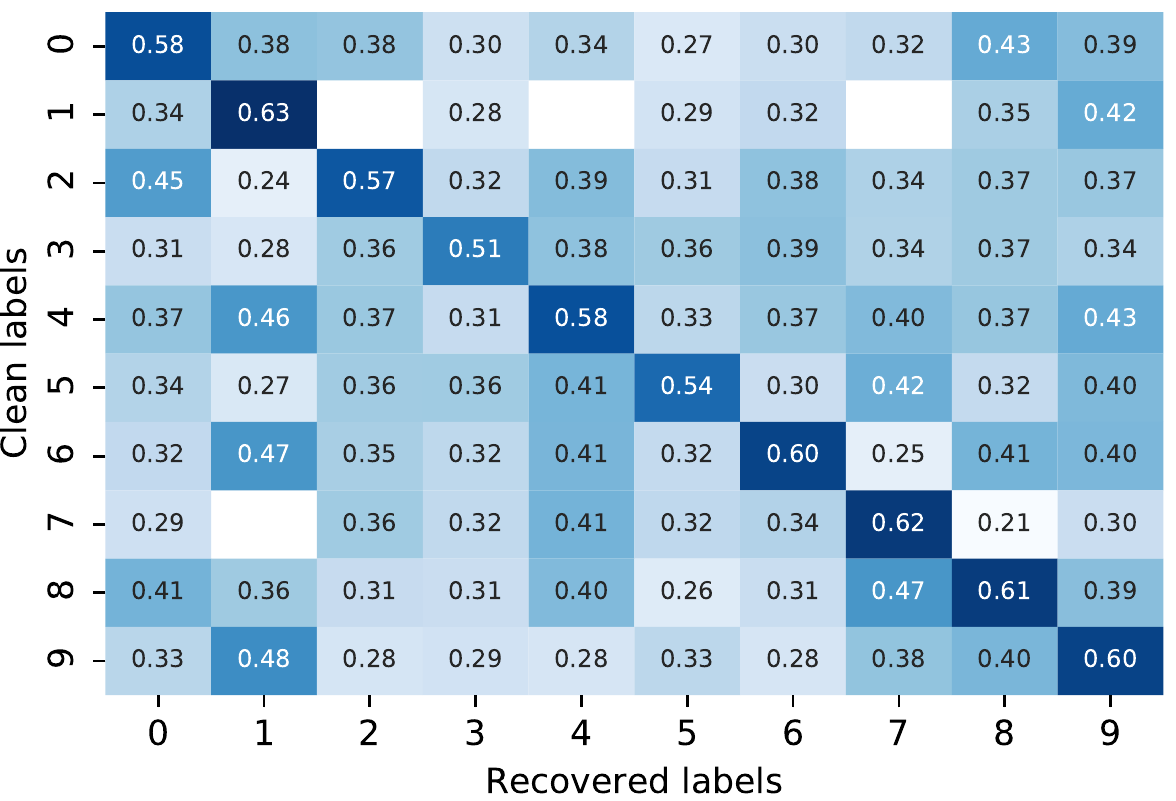}
\caption{Average sample weights $w_i$ under various labels. The white areas represent the case that no sample lies in the cell.}
\label{fig:weights}
\end{figure}

\begin{table*}[t]
\caption{Selective classification error rate (\%) on CIFAR10, SVHN, and Dogs vs. Cats datasets
for various coverage rates~(\%). Mean and standard deviation are calculated over 3 trials.
The best entries and those overlap with them are marked bold.}
\label{tab:sel_cls}
\begin{center}
\begin{small}
\begin{tabular}{lcccccc}
\toprule
Dataset & Coverage& Ours & Deep Gamblers~\cite{ziyindeepgambler} & SelectiveNet~\cite{geifman2019selectivenet} & SR~\cite{geifman2017selective} & MC-dropout~\cite{geifman2017selective} \\
\midrule
\multirow{7}{*}{CIFAR10} &  100 & 6.05$\pm$0.20 & 6.12$\pm$0.09 & 6.79$\pm$0.03 & 6.79$\pm$0.03 & 6.79$\pm$0.03  \\
 &  95 & \textbf{3.37$\pm$0.05} & \textbf{3.49$\pm$0.15} & 4.16$\pm$0.09 & 4.55$\pm$0.07 & 4.58$\pm$0.05  \\
 &  90 & \textbf{1.93$\pm$0.09} & 2.19$\pm$0.12 & 2.43$\pm$0.08 & 2.89$\pm$0.03 & 2.92$\pm$0.01  \\
 &  85 & \textbf{1.15$\pm$0.18} & \textbf{1.09$\pm$0.15} & 1.43$\pm$0.08 & 1.78$\pm$0.09 & 1.82$\pm$0.09  \\
 &  80 & \textbf{0.67$\pm$0.10} & \textbf{0.66$\pm$0.11} & 0.86$\pm$0.06 & 1.05$\pm$0.07 & 1.08$\pm$0.05  \\
 &  75 & \textbf{0.44$\pm$0.03} & 0.52$\pm$0.03 & \textbf{0.48$\pm$0.02} & 0.63$\pm$0.04 & 0.66$\pm$0.05  \\
 &  70 & \textbf{0.34$\pm$0.06} & 0.43$\pm$0.07 & \textbf{0.32$\pm$0.01} & 0.42$\pm$0.06 & 0.43$\pm$0.05 \\
\midrule
\multirow{5}{*}{SVHN}   & 100 & 2.75$\pm$0.09 & 3.24$\pm$0.09 & 3.21$\pm$0.08 & 3.21$\pm$0.08 & 3.21$\pm$0.08  \\
 &  95  & \textbf{0.96$\pm$0.09} & 1.36$\pm$0.02 & 1.40$\pm$0.01 & 1.39$\pm$0.05 & 1.40$\pm$0.05  \\
 &  90  & \textbf{0.60$\pm$0.05} & 0.76$\pm$0.05 & 0.82$\pm$0.01 & 0.89$\pm$0.04 & 0.90$\pm$0.04  \\
 &  85  & \textbf{0.45$\pm$0.02} & 0.57$\pm$0.07 & 0.60$\pm$0.01 & 0.70$\pm$0.03 & 0.71$\pm$0.03  \\
 &  80  & \textbf{0.43$\pm$0.01} & 0.51$\pm$0.05 & 0.53$\pm$0.01 & 0.61$\pm$0.02 & 0.61$\pm$0.01  \\
\midrule
\multirow{5}{*}{Dogs vs. Cats}  & 100 & 3.01$\pm$0.17 & 2.93$\pm$0.17 & 3.58$\pm$0.04 & 3.58$\pm$0.04 & 3.58$\pm$0.04  \\
 &  95 & \textbf{1.25$\pm$0.05} & \textbf{1.23$\pm$0.12} & 1.62$\pm$0.05 & 1.91$\pm$0.08 & 1.92$\pm$0.06  \\
 &  90 & \textbf{0.59$\pm$0.04} & \textbf{0.59$\pm$0.13} & 0.93$\pm$0.01 & 1.10$\pm$0.08 & 1.10$\pm$0.05  \\
 &  85 & \textbf{0.25$\pm$0.11} & 0.47$\pm$0.10 & 0.56$\pm$0.02 & 0.82$\pm$0.06 & 0.78$\pm$0.06  \\
 &  80 & \textbf{0.15$\pm$0.06} & 0.46$\pm$0.08 & 0.35$\pm$0.09 & 0.68$\pm$0.05 & 0.55$\pm$0.02 \\
\bottomrule
\end{tabular}
\end{small}
\end{center}
\end{table*}

\section{Application II: Selective Classification}
\subsection{Problem formulation}
Selective classification, a.k.a. classification with rejection, trades classifier coverage for accuracy~\cite{el2010foundations}, where the coverage is defined as the fraction of classified samples in the dataset; the classifier is allowed to output ``don't know'' for certain samples. The task focuses on the noise-free setting and allows the classifier to abstain from potential out-of-distribution samples or samples that lies in the tail of data distribution, that is, making prediction only on the samples with confidence.

Formally, a selective classifier is a composition of two functions $(f_{\theta}, g)$, where $f_{\theta}$ is the conventional $c$-class classifier and $g$ is the selection function that reveals the underlying uncertainty of inputs.
Given an input $\x$, selective classifier outputs
\begin{equation}
\label{eq:selective_classifier}
    (f_{\theta}, g)(\x) = \begin{cases}
                \mathrm{Abstain}, & g(\x) > \tau; \\
                f_{\theta}(\x), & \mathrm{otherwise,}
                \end{cases}
\end{equation}
for a given threshold $\tau$ that controls the trade-off.

\subsection{Approach}
Inspired by~\cite{thulasidasan2019dac,ziyindeepgambler},
we adapt our presented approach in Algorithm~\ref{alg:sat_meta_alg} to the selective classification task.
We introduce an extra ($c+1$)th class (represents \emph{abstention}) during training
and replace selection function $g(\cdot)$ in 
Equation~\eqref{eq:selective_classifier} by $f_{\theta}(\cdot)_{c}$.
In this way, we can train a selective classifier in an end-to-end fashion.
Besides, unlike previous works that provide no explicit signal for learning abstention class,
we use model predictions as a guideline when designing the learning process.

Given  a mini-batch of data pairs $\{(\x_i, \y_i)\}_m$,
the model prediction $\p_i$ and its exponential moving average $\bm{t}_i$ for each sample,
we optimize the classifier $f$ by minimizing:
\begin{equation}
\label{eq:abstain_loss}
  \mathcal{L}(f_{\theta})=-\frac{1}{m}\sum_i [\bm{t}_{i,y_i} \log \p_{i,y_i}+(1-\bm{t}_{i,y_i}) \log \p_{i,c}],
\end{equation}
where $y_i$ is the index of the non-zero element in the one hot label vector $\y_i$. The first term measures the cross entropy loss between prediction and original label $\y_i$, in order to learn a good multi-class classifier.
The second term acts as the selection function and identifies uncertain samples in datasets.
$\bm{t}_{i,y_i}$ dynamically trades-off these two terms:
if $\bm{t}_{i,y_i}$ is very small, the sample is deemed as uncertain and the second term enforces the selective classifier to learn to abstain from this sample;
if $\bm{t}_{i,y_i}$ is close to 1, the loss recovers the standard cross entropy minimization
and enforces the selective classifier to make perfect predictions.

\subsection{Experiments}

\noindent\textbf{Setup}\quad
We conduct the experiments on three datasets: CIFAR10~\cite{krizhevsky2009cifar}, SVHN~\cite{netzer2011svhn}, and Dogs vs. Cats~\cite{catsdogs}.
We compare our method with previous state-of-the-art methods on selective classification, including Deep Gamblers~\cite{ziyindeepgambler}, SelectiveNet~\cite{geifman2019selectivenet}, Softmax Response (SR), and MC-dropout~\cite{geifman2017selective}.
The experiments are based on the official open-sourced implementation\footnote{\url{https://github.com/Z-T-WANG/NIPS2019DeepGamblers}} of Deep Gamblers to ensure a fair comparison. We use the VGG-16 network~\cite{simonyan2014very} with batch normalization~\cite{ioffe2015batch} and dropout~\cite{srivastava2014dropout} as the base classifier in all experiments. The network is optimized using SGD with an initial learning rate of 0.1, a momentum of 0.9, a weight decay of 0.0005, a batch size of 128, and a total training epoch of 300. The learning rate is decayed by 0.5 in every 25 epochs. For our method, we set the hyper-parameters $\mathrm{E}_s=0, \alpha=0.99$.

\medskip
\noindent\textbf{Main results}\quad
The results of prior methods are cited from original papers and are summarized in Table~\ref{tab:sel_cls}. We see that our method achieves up to 50\% relative improvements compared with all other methods under various coverage rates, on all datasets. Notably, Deep Gamblers also introduces an extra abstention class in their method but without applying model predictions. The improvement of our method comes from the use of model predictions in the training process.

\section{Improved Representations Learning}
\label{sec:sat_ssl}

\subsection{Self-Supervised Self-Adaptive Training}
\label{sec:sat_ssl_instantiation}

\noindent\textbf{Instantiation}\quad
We consider training images $\{\x_i\}_n$ without label and use a deep network followed by a non-linear projection as encoder $f_{\theta}$. Then, we instantiate the meta Algorithm~\ref{alg:sat_meta_alg} for self-supervised learning as follows:
\begin{enumerate}
    \item \emph{Target initialization.} Since the labels are absent, each training target $\bm{t}_i$ is randomly and independently drawn from a standard normal distribution.
    
    \item \emph{Normalization function.} We directly normalize each representation by dividing its $\ell_2$ norm.
    
    \item \emph{Loss function.} The loss function is implemented as the Mean Square Error (MSE) between the normalized model prediction $\p_i$ and training target $\bm{t}_i$.
\end{enumerate}

The above instantiation of our meta algorithm, i.e., fitting models' own accumulated representations, suffices to learn decent representations, as exhibited by the blue curve in Fig.~\ref{fig:ssl_acc_at_training}. However, as discussed in Sec.~\ref{sec:sat_ssl_prelim} that the consistency of the training target also plays an essential role, especially in self-supervised representation learning, we introduce two components that further improve the representation learning with self-adaptive training.

\medskip
\noindent\textbf{Momentum encoder and predictor}\quad
We follow prior works~\cite{he2020momentum,grill2020bootstrap} to employ a momentum encoder $f_{\theta_m}$, whose parameters $\theta_m$ is also updated by the EMA scheme as
\begin{equation}
    \theta_m \leftarrow \beta\times \theta_m + (1 - \beta)\times\theta.
\end{equation}
With the slowly-evolving $f_{\theta_m}$, we can obtain a representation
\begin{equation}
    \bm{z}_i = f_{\theta_m}(\bm{x}^t_i),
\end{equation}
and construct the target $\bm{t}_i$ following the EMA scheme in Equation~\eqref{eq:meta_ema}.

Furthermore, to prevent the model from outputting the same representation for every image in each iteration (i.e., collapsing), we further use a predictor $g$ to transform the output of encoder $f_{\theta}$ to prediction 
\begin{equation}
    \bm{p}_i = g(f_{\theta}(\bm{x}_i)),
\end{equation}
where $g$ has the same number of output units as $f_{\theta}$.

\begin{figure}[t]
\centering
\includegraphics[width=.95\linewidth]{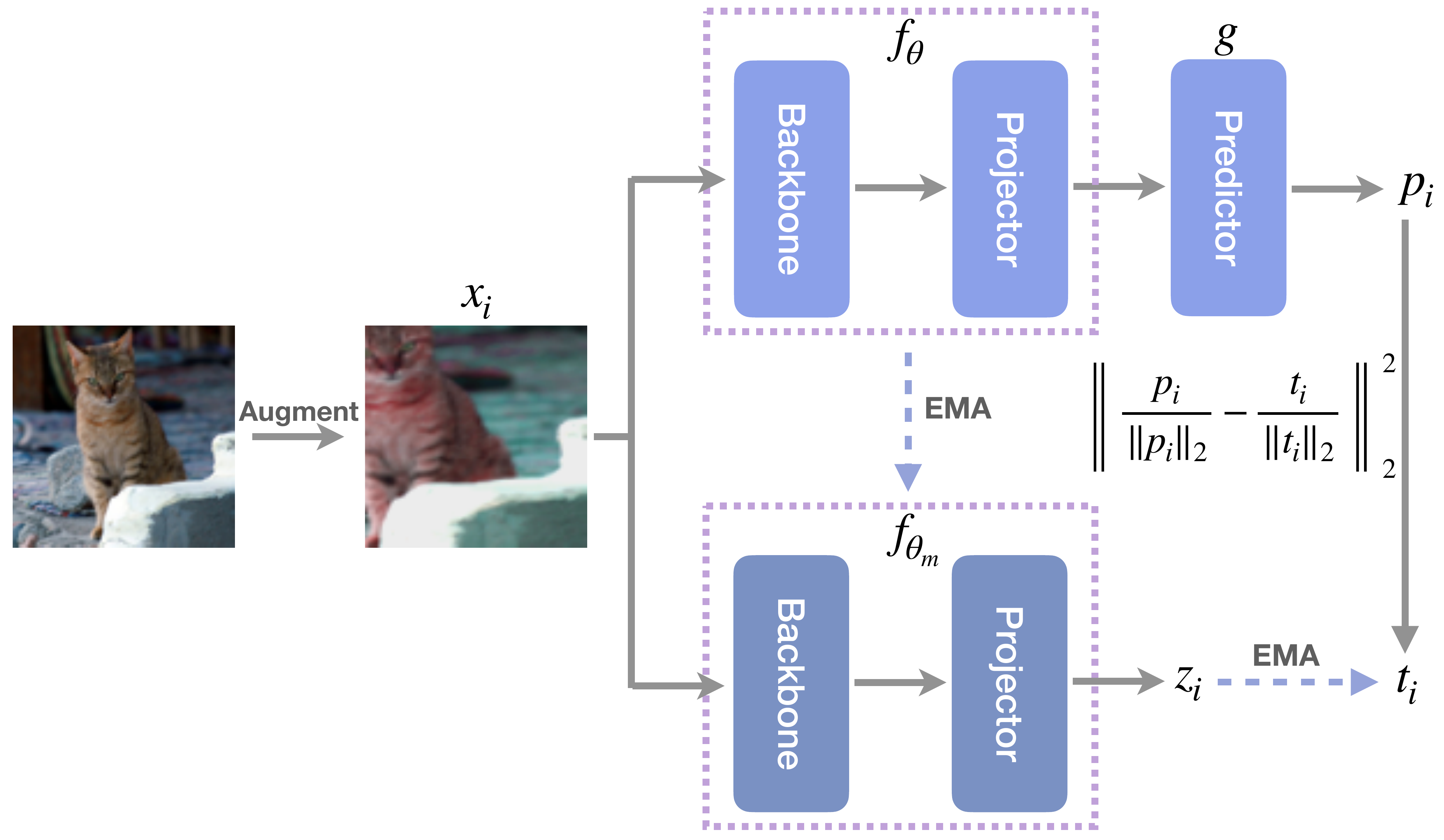}
\caption{Pipeline of Self-Supervised Self-Adaptive Training.}
\label{fig:ssl_pipeline}
\end{figure}

\medskip
\noindent\textbf{Putting everything together}\quad
We $\ell_2$-normalize $\bm{p}_i$ to $\bm{\widetilde{p}}_i = \bm{p}_i / \norm{\bm{p}_i}_2$ and $\bm{t}_i$ to $\bm{\widetilde{t}}_i = \bm{t}_i / \norm{\bm{t}_i}_2$. Finally, the MSE loss between the normalized predictions and accumulated representations
\begin{equation}
\label{eq:ssl_obj}
    \mathcal{L}(\p_i, \bm{t}_i; f_\theta, g) = \frac{1}{m}\sum_i \norm{\bm{\widetilde{p}}_{i} - \bm{\widetilde{t}}_{i}}^2_2
\end{equation}
is minimized to update the encoder $f_\theta$ and predictor $g$. We term this variant \emph{Self-Supervised Self-Adaptive Training}, and summarize the pseudo-code in Algorithm~\ref{alg:approach_ssl} and the overall pipeline in Fig.~\ref{fig:ssl_pipeline}. Our approach is straightforward to implement in practice and requires only single-view training, which significantly alleviates the heavy computation burden of data augmentation operations.

\begin{algorithm}[h]
\small
\caption{Self-Supervised Self-Adaptive Training}
\label{alg:approach_ssl}
\begin{algorithmic}[1]
\REQUIRE Data $\{\bm{x}_i\}_n$; encoder $f_{\theta}$ and momentum encoder $f_{\theta_m}$; predictor $g$; momentum terms $\alpha = 0.7$, $\beta = 0.99$
\STATE Initialize $\theta_m \leftarrow \theta$
\STATE Randomly initialize training target $\{\bm{t}_{i}\}_n \sim \mathcal{N}(\bm{0}, \mathbf{I})$
\REPEAT
\STATE Fetch augmented mini-batch data $\{\bm{x}^t_i\}_m$
\FOR{$i=1$ {\bfseries to} $m$  (in parallel)}
\STATE $\bm{p}_i = g(f_\theta(\bm{x}^t_i))$
\STATE $\bm{z}_i = f_{\theta_m}(\bm{x}^t_i)$
\STATE $\bm{\widetilde{z}}_i = \bm{z}_i / \norm{\bm{z}_i}_2$; \quad$\bm{\widetilde{t}}_i = \bm{t}_i / \norm{\bm{t}_i}_2$
\STATE $\bm{t}_{i} \leftarrow \alpha\times \bm{\widetilde{t}}_{i} + (1 - \alpha)\times \bm{\widetilde{z}}_i$
\ENDFOR
\STATE $\mathcal{L}(\p_i, \bm{t}_i; f_\theta, g) = \frac{1}{m}\sum_i \norm{\frac{\bm{p}_i}{\norm{\bm{p}_i}_2} - \frac{\bm{t}_i}{\norm{\bm{t}_i}_2}}^2_2$
\STATE Update $f_\theta$ and $g$ by SGD on $\mathcal{L}(\p_i, \bm{t}_i; f_\theta, g)$
\STATE Update $\theta_m \leftarrow \beta\times \theta_m + (1 - \beta)\times \theta$
\UNTIL{end of training}
\end{algorithmic}
\end{algorithm}

\medskip
\noindent\textbf{Methodology differences with prior works}\quad
BYOL~\cite{grill2020bootstrap} formulated the self-supervised training of deep models as predicting the representation of one augmented view of an image from the other augmented view of the same image. The self-supervised self-adaptive training shares some similarities with BYOL since both methods do not need to contrast with negative examples to prevent the collapsing issue. Instead of directly using the outputs of a momentum encoder as training targets, our approach uses the \emph{accumulated predictions} as training targets, which contain all historical view information for each image. As a result, our approach requires only a single view during training, which is much more efficient as shown in Sec.~\ref{sec:ssl_multi_view} and Fig.~\ref{fig:batch_time}.
Besides, NAT~\cite{bojanowski2017unsupervised} used an online clustering algorithm to assign noise as the training target for each image. Unlike NAT that fixes the noise while updating the noise assignment during training, our approach uses the noise as the \emph{initial} training target and updates the noise by model predictions in the subsequent training process. InstDisc~\cite{wu2018unsupervised} used a memory bank to store the representation of each image in order to construct positive and negative samples for the contrastive objective. By contrast, our method gets rid of the negative samples and only matches the prediction with the training target of the same image (i.e., positive sample).

\subsection{Bypassing collapsing issues}
\label{fig:sat_collapse}
We note that there exist trivial local minima when we directly optimize the MSE loss between predictions and training targets due to the absence of negative pairs: the encoder $f_{\theta}$ can simply output a constant feature vector for every data point to minimize the training loss, a.k.a. collapsing issue~\cite{grill2020bootstrap}.
Despite the existence of a collapsing solution, self-adaptive training intrinsically prevents the collapsing. The initial targets $\{\bm{t}_i\}_n$ are different for different classes (under the supervised setting) or even for different images (under the self-supervised setting), enforcing the models to learn different representations for different classes/images. Our empirical studies in Fig.~\ref{fig:acc_curve}, \ref{fig:ssl_acc_at_training} of the main body and Fig.~\ref{fig:acc_curve_r08} of the Appendix strongly support that deep neural networks are able to learn meaningful information from corrupted data or even random noise, and bypass the model collapse. Based on this, our approach significantly improves the supervised learning (see Fig.~\ref{fig:ce_acc_curve}) and self-supervised learning (see the blue curve in Fig.~\ref{fig:ssl_acc_at_training}) of deep neural networks.

\begin{figure}[t]
\centering
\includegraphics[width=.9\linewidth]{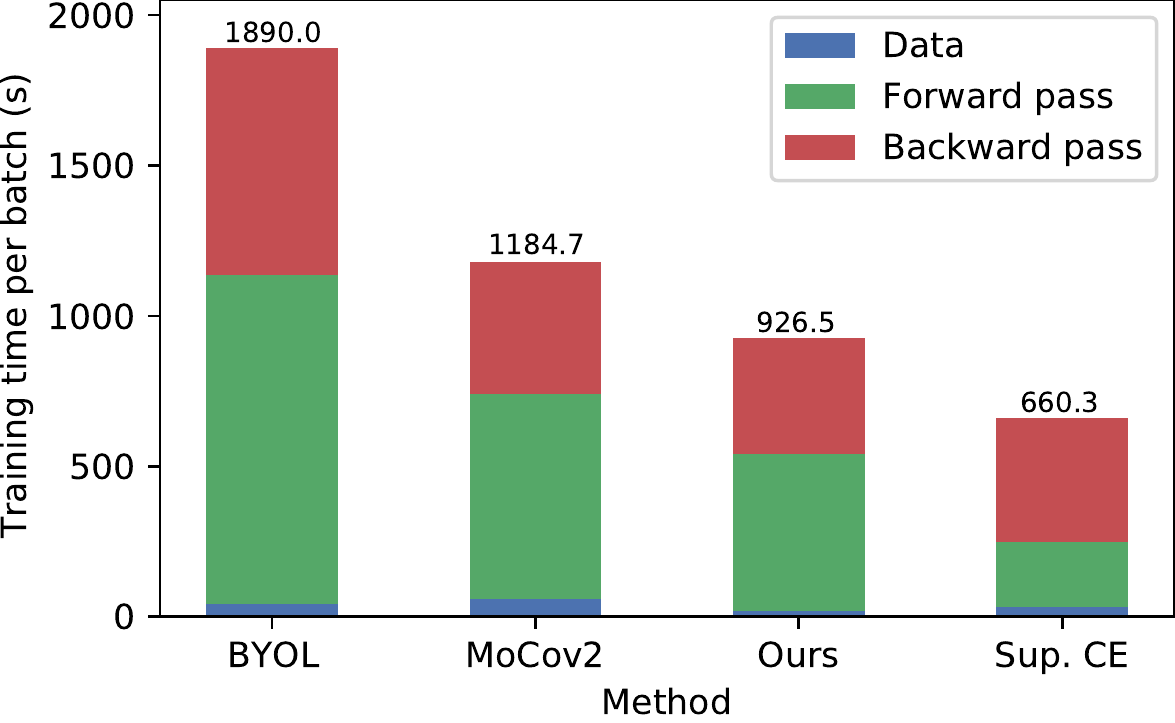}
\caption{Training time comparison on the ImageNet dataset in terms of the data preprocessing/forward pass/backward pass time. The comparison is performed on a machine with 8 V100 GPUs. It shows the efficiency of our method. }
\label{fig:batch_time}
\end{figure}

\subsection{Is multi-view training indispensable?}
\label{sec:ssl_multi_view}
The success of state-of-the-art self-supervised learning approaches~\cite{he2020momentum,grill2020bootstrap} largely hinges on the multi-view training scheme: they essentially use strong data augmentation operations to create multiple views (crops) of the same image and then match the representation of one view with the other views of the same image. Despite their promising performance, these methods suffer heavily from the computational burden of data pre-processing and the training on extra views.
Concretely, as shown in Fig.~\ref{fig:batch_time}, prior methods BYOL~\cite{grill2020bootstrap} and MoCo~\cite{he2020momentum} incur doubled training time compared with standard supervised cross entropy training. In contrast, since our method requires only single-view training, its training is only slightly slower than the supervised method and is much faster than MoCo and BYOL. See Appendix~\ref{sec:setup_ssl_training_time} for the detailed setup of this comparison.

We further conduct experiments to evaluate the performance of the multi-view training scheme on the CIFAR10 and STL10 datasets, which cast doubt on its necessity for learning a good representation. 
Concretely, we first pre-train a ResNet-18 encoder using different algorithms and then train a linear classifier on top of the encoder to evaluate the classification performance
(see Sec.~\ref{sec:linear_cls_setup} for details).
As shown in Table~\ref{tab:ssl_ablation_view}, although the performance of MoCo and BYOL is nearly halved on both datasets when using the single-view training scheme, the self-supervised adaptive training achieves comparable results under both settings. Moreover, our approach with single-view training even slightly outperforms the MoCo and BYOL with multi-view training. We attribute this superiority to the guiding principle of our algorithm: by dynamically incorporating the model predictions, each training target contains the relevant information about all the historical views of each image and, therefore, implicitly forces the model learning representations that are invariant to the historical views.

\subsection{On the power of momentum encoder and predictor}
\label{sec:ssl_menc_pred}
Momentum encoder and predictor are two important components of our approach and BYOL. The work of BYOL~\cite{grill2020bootstrap} showed that the algorithm may converge to a trivial solution if one of them is removed, not to mention removing both of them. As shown in Table~\ref{tab:ssl_ablation_compo}, however, our results challenge their conclusion: 1) with the predictor, the linear evaluation accuracy of either BYOL (>~85\%) or our method (>~90\%) is non-trivial, regardless of the absence and the configuration of momentum encoder; 2) without the predictor, the momentum encoder with a sufficiently large momentum can also improve the performance and bypass the collapsing issue. The results suggest that although both the predictor and the momentum encoder are indeed crucial for the performance of representation learning, either one of them with a proper configuration (i.e., the momentum term) suffices to avoid the collapse. We note that the latest version of \cite{grill2020bootstrap} also found that the momentum encoder can be removed without collapse when carefully tuning the learning rate of the predictor. Our results, however, are obtained using the same training setting.

Moreover, we find that self-supervised self-adaptive training exhibits impressive resistance to collapsing, despite using only single-view training. Our approach can learn a decent representation even when the predictor and momentum encoder are both removed (see the seventh row of Table~\ref{tab:ssl_ablation_compo}). We hypothesize that the resistance comes from the consistency of the training target due to our EMA scheme. This hypothesis is also partly supported by the observation that the learning of BYOL heavily depends on the slowly-evolving momentum encoder.

\begin{table}[t]
\caption{Necessity of the multi-view training scheme in self-supervised learning. Top1 test Accuracy (\%) on CIFAR10 and STL10 datasets using the ResNet-18 backbone.
}
\label{tab:ssl_ablation_view}
\begin{center}
\begin{small}
\setlength{\tabcolsep}{1.5mm}{
\begin{tabular}{lccc}
\toprule
Method & \# Views & CIFAR10 & STL10\\
\midrule
\multirow{2}{*}{MoCov2~\cite{chen2020improved}} & 1 & 46.36 & 39.65 \\
 & 2 & 91.68 & 88.90 \\
\midrule
\multirow{2}{*}{BYOL~\cite{grill2020bootstrap}} & 1 & 44.36 & 42.60 \\
 & 2 & 91.78 & 89.60 \\
\midrule
\multirow{2}{*}{Ours} & 1 & \textbf{92.27} & 89.55 \\
 & 2 & 91.56 & \textbf{90.05} \\
\bottomrule
\end{tabular}
}
\end{small}
\end{center}
\end{table}

\begin{table}[t]
\caption{Influence of the momentum encoder and predictor. The performance is measured by the top1 test Accuracy (\%) on the CIFAR10 dataset using the ResNet-18 backbone. The entries with $^\dag$ indicate that $\alpha = 0.9$.}
\label{tab:ssl_ablation_compo}
\begin{center}
\begin{small}
\setlength{\tabcolsep}{1.5mm}{
\begin{tabular}{lcccc}
\toprule
Method & Predictor & Momentum $\beta$ & Accuracy \\
\midrule

\multirow{6}{*}{BYOL~\cite{grill2020bootstrap}} & \multirow{3}{*}{$\times$} & 0.0 & 25.64 \\
 &  & 0.99 & 26.87 \\
 &  & 0.999 & 72.44 \\
 \cmidrule{2-4}
 & \multirow{3}{*}{$\checkmark$} & 0.0 & 85.22 \\
 &  & 0.99 & 91.78 \\
 & & 0.999 & 90.68 \\
\midrule
\multirow{6}{*}{Ours} & \multirow{3}{*}{$\times$} & 0.0 $^\dag$ & 78.36 \\
 &  & 0.99 $^\dag$ & 79.68 \\
 &  & 0.999 $^\dag$ & 83.58 \\
 \cmidrule{2-4}
 & \multirow{3}{*}{$\checkmark$} & 0.0 & 90.18 \\
 &  & 0.99 & \textbf{92.27} \\
 &  & 0.999 & 90.92 \\
\bottomrule
\end{tabular}
}
\end{small}
\end{center}
\end{table}

\section{Application III: Linear Evaluation Protocol in Self-Supervised Learning}
\label{sec:linear_cls}

\subsection{Experimental setup}
\label{sec:linear_cls_setup}

\noindent\textbf{Datasets and data augmentations}\quad
We conduct experiments on four benchmarks: CIFAR10/CIFAR100~\cite{krizhevsky2009cifar} with 50k images; STL10~\cite{coates2011analysis} with 105k images; ImageNet~\cite{deng2009imagenet} with $\sim$1.2m images. The choice of data augmentations follows prior works~\cite{he2020momentum,chen2020simple}: we take a random crop from each image and resize it to a fixed size (i.e., $32\times 32$ for CIFAR10/CIFAR100, $96\times 96$ for STL10, and $224\times 224$ for ImageNet);
the crop is then randomly transformed by color jittering, horizontal flip, and grayscale conversion (and Gaussian blurring for ImageNet).

\medskip
\noindent\textbf{Network architecture}\quad
The encoders $f_{\theta}$ and $f_{\theta_m}$ consist of a backbone of ResNet-18~\cite{he2016deep}/ResNet-50~\cite{he2016deep}/AlexNet~\cite{krizhevsky2012image} and a projector that is instantiated by a multi-layer perceptron (MLP). We use the output of the last global average pooling layer of the backbone as the extracted feature vectors. Following prior works~\cite{chen2020simple}, the output vectors of the backbone are transformed by the projector MLP to a dimension of 256. Besides, the predictor $g$ is also instantiated by an MLP with the same architecture as the projector in $f_{\theta}$. In our implementation, all the MLPs have one hidden layer of size 4,096, followed by a batch normalization~\cite{ioffe2015batch} layer and the ReLU activation~\cite{nair2010rectified}.

\begin{table}[t]
\caption{Linear evaluation protocol on CIFAR10, CIFAR100, and STL10 datasets using different backbones.}
\label{tab:ssl_sota}
\begin{center}
\begin{small}
\setlength{\tabcolsep}{1.5mm}{
\begin{tabular}{llccc}
\toprule
Backbone & Method & CIFAR10 & CIFAR100 & STL10\\
\midrule 
\multirow{6}{*}{AlexNet} 
 & SplitBrain~\cite{zhang2017split} & 67.1 & 39.0 & - \\
 & DeepCluster~\cite{caron2018deep} & 77.9 & 41.9 & - \\
 & InstDisc~\cite{wu2018unsupervised} & 70.1 & 39.4 & - \\
 & AND~\cite{huang2019unsupervised} & 77.6 & 47.9 & - \\
 & SeLa~\cite{asano2020self} & 83.4 & 57.4 & - \\
 & CMC~\cite{tian2019contrastive} & - & - & 83.28 \\
 & Ours & \textbf{83.55} & \textbf{59.80} & \textbf{83.75} \\
\midrule
\multirow{4}{*}{ResNet-50} & MoCov2~\cite{chen2020improved} & 93.20 & 69.48 & 91.95 \\
 & SimCLR\cite{chen2020simple} & 93.08 & 67.92 & 90.90 \\
 & BYOL~\cite{grill2020bootstrap} & 93.48 & 68.48 & 92.40 \\
 & Ours & \textbf{94.04} & \textbf{70.16} & \textbf{92.60} \\
\bottomrule
\end{tabular}
}
\end{small}
\end{center}
\end{table}

\medskip
\noindent\textbf{Self-supervised pre-training settings}\quad
On the CIFAR10, CIFAR100, and STL10 datasets, we optimize the networks using an SGD optimizer with a momentum of 0.9 and a weight decay of 0.00005. We use a batch size of 512 for all methods in all experiments and train the networks for 800 epochs using 4 NVIDIA GTX 1080Ti GPUs. The hyper-parameters of our method are set to $\alpha=0.7$, $\beta = 0.99$.
On the ImageNet dataset, we use a batch size of 1024, a LARS~\cite{you2017large} optimizer with a weight decay of 0.000001, and train the networks for 200 epochs on 8 GPUs.
The base learning rate is set to 2.0 and is scaled linearly with respect to the base batch size 256 following~\cite{goyal2017accurate}. For our method, we set the initial values of the momentum terms $\alpha$/$\beta$ to 0.5/0.99, which are then increased to 0.8/1.0 following the prior work~\cite{grill2020bootstrap}.
During training, the learning rate is warmed up for the first 30 epochs (10 epochs when using the ImageNet dataset) and is then adjusted according to the cosine annealing schedule~\cite{loshchilov2016sgdr}.

\medskip
\noindent\textbf{Linear evaluation protocol}\quad
Following the common practice~\cite{he2020momentum,chen2020simple,grill2020bootstrap}, we evaluate the representation learned by self-supervised pre-training using linear classification protocol. That is, we remove the projector in $f_{\theta}$ and the predictor $g$, fix the parameters of the backbone of the encoder $f_{\theta}$, and train a supervised linear classifier on top of the features extracted from the encoder. On the ImageNet dataset, the linear classifier is trained for 100 epochs with a weight decay of 0.0, a batch size of 512, and an initial learning rate of 2.0 that is decayed to 0 according to the cosine annealing schedule~\cite{loshchilov2016sgdr}.
On the rest datasets, the linear classifier is trained for 100 epochs with a weight decay of 0.0 and a batch size of 512. The initial learning rate is set to 0.4 and is decayed by a factor of 0.1 at the 60th and 80th training epochs. The performance is measured in terms of the top1 test accuracy of the linear classifier.

\subsection{Comparison with the state-of-the-art}
We firstly conduct experiments on the CIFAR10/100~\cite{krizhevsky2009cifar} and STL10~\cite{coates2011analysis} datasets and compare our self-supervised self-adaptive training with three state-of-the-art methods, including the contrastive learning methods MoCo~\cite{he2020momentum}, SimCLR~\cite{chen2020simple}, and bootstrap method BYOL~\cite{grill2020bootstrap}. For fair comparisons, we use the same code base and the same experimental settings for all the methods, following their official open-sourced implementations. We carefully adjust the hyper-parameters on the CIFAR10 dataset for each method and use the same parameters on the rest datasets. Besides, we also conduct experiments using AlexNet as the backbone and compare the performance of our method with the reported results of prior methods.  The results are summarized in Table~\ref{tab:ssl_sota}. We can see that, despite using only single-view training, our self-supervised self-adaptive training consistently obtains better performance than all other methods on all datasets with different backbones.

\begin{table}[t]
\caption{Linear evaluation protocol on ImageNet dataset using the ResNet-50~\cite{he2016deep} backbone. The entries marked with $\dag$ are cited from~\cite{chen2020exploring}; those marked by $\ddag$ are cited from~\cite{caron2020unsupervised}.}
\label{tab:ssl_sota_in1k}
\begin{center}
\begin{small}
\setlength{\tabcolsep}{1.5mm}{
\begin{tabular}{lccc}
\toprule
Method & \# Views & Epochs & Linear Acc. (\%) \\
\midrule 
InstDisc~\cite{wu2018unsupervised} & 2 & 200 & 56.5 \\
MoCo~\cite{he2020momentum} & 2 & 200 &  60.6 \\
CPCv2~\cite{henaff2020data} & 2 & 200 & 63.8 \\
SimCLR~\cite{chen2020simple} & 2 & 200 &  66.6 \\
MoCov2~\cite{chen2020improved} & 2 & 200 &  67.5 \\
BYOL~\cite{grill2020bootstrap}$^\dag$ & 2 & 200 & 70.6 \\
SwAV~\cite{caron2020unsupervised}$^\dag$ & 2 & 200 & 69.1 \\
SimSiam~\cite{chen2020exploring}$^\dag$ & 2 & 200 & 70.0 \\
Ours & 1 & 200 & 67.4 \\
Ours & 2 & 200 & \textbf{72.8} \\
\midrule
CMC~\cite{tian2019contrastive} & 2 & 240 & 60.0 \\
BYOL~\cite{grill2020bootstrap} & 2 & 300 & 72.4 \\
BarlowTwins~\cite{zbontar2021barlow} & 2 & 300 & 71.4 \\
MoCov3~\cite{chen2021empirical} & 2 & 300 & 72.8 \\
SeLa~\cite{asano2020self} & 2 & 400 & 61.5 \\
SeLav2~\cite{asano2020self}$^\ddag$ & 2 & 400 & 67.2 \\
DeepClusterv2~\cite{caron2018deep}$^\ddag$ & 2 & 400 & 70.2 \\
SwAV~\cite{caron2020unsupervised}$^\ddag$ & 2 & 400 & 70.1 \\
PIRL~\cite{misra2020self} & 2 & 800 & 63.6 \\
SimCLRv2~\cite{chen2020big} & 2 & 800 & 71.7 \\
\bottomrule
\end{tabular}
}
\end{small}
\end{center}
\end{table}

\begin{figure*}[t]
\begin{subfigure}{.49\linewidth}
    \centering
    \includegraphics[width=.83\linewidth]{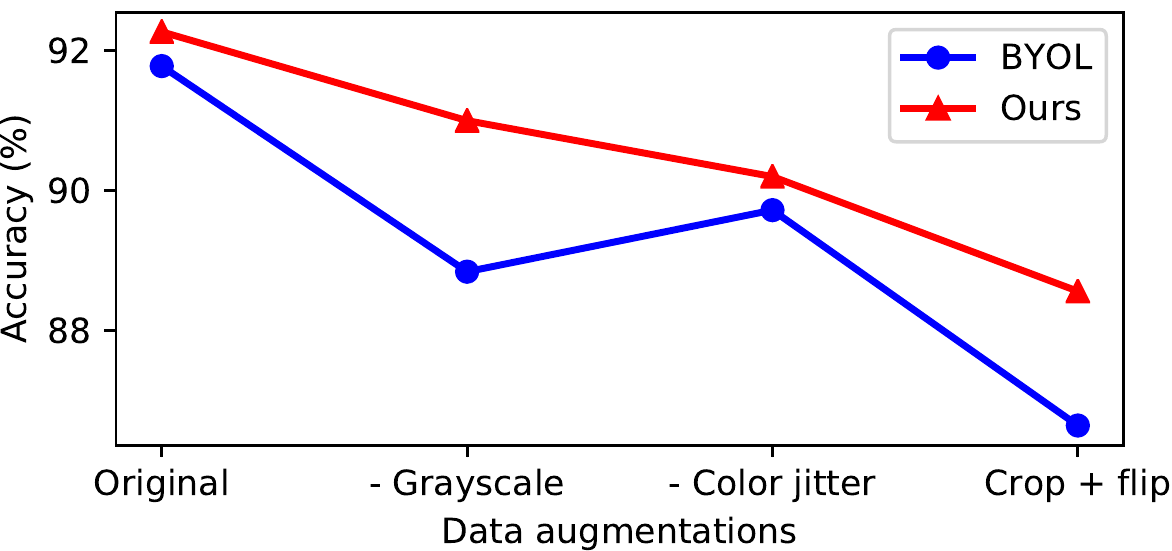}
    \caption{Sensitivity to data augmentation.}
    \label{fig:ssl_ablation_aug}
\end{subfigure}
\begin{subfigure}{.49\linewidth}
    \centering
    \includegraphics[width=.8\linewidth]{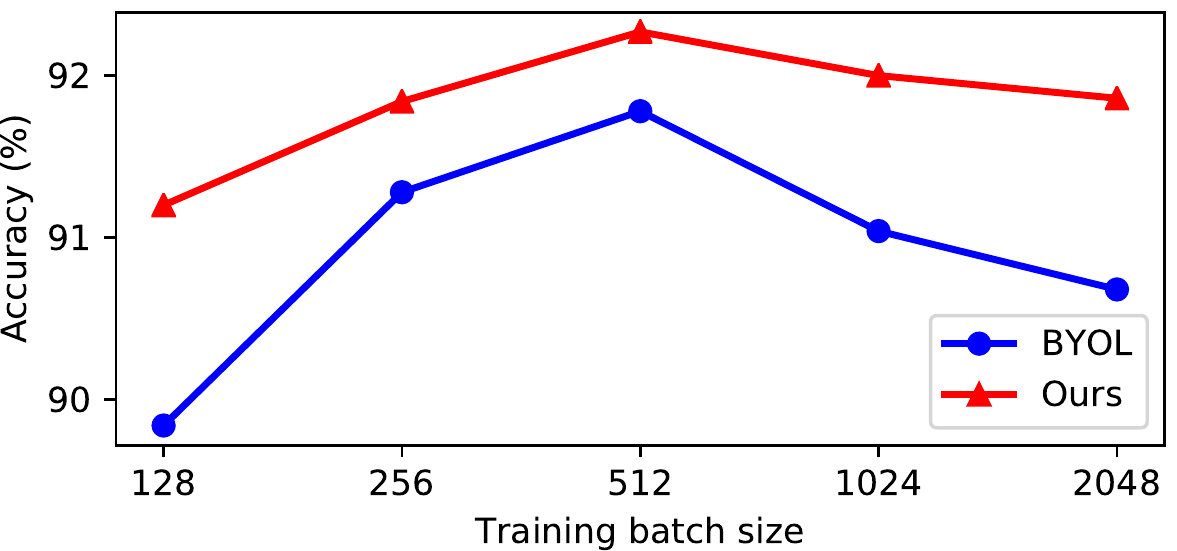}
    \caption{Sensitivity to training batch size.}
    \label{fig:ssl_ablation_bs}
\end{subfigure}
\caption{Sensitivity of our method to data augmentations and training batch size. The performance is measured by the top1 test Accuracy (\%) on the CIFAR10 dataset using the ResNet-18 backbone.}
\label{fig:ssl_ablation_aug_bs}
\end{figure*}

Moreover, we follow the standard setups in self-supervised learning on the large-scale ImageNet~\cite{russakovsky2015imagenet} dataset (i.e., using ResNet-50 backbone, 200 pre-training epochs, and the linear evaluation protocol), and report the comparisons with the state-of-the-art methods in Table~\ref{tab:ssl_sota_in1k}. We find that, while using 50\% fewer augmented views during training, the single-view self-adaptive training achieves 67.4\% top1 accuracy, which is comparable with the two-view SimCLR and MoCov2 and slightly worse than BYOL. We conjecture that, on the large-scale dataset, the training targets are updated less frequently than those on a smaller dataset, which results in performance degradation. Regarding this, to make a fair comparison, we also pre-train a ResNet-50 model based on the two-view variant of SAT, at a similar training cost as BYOL. From the results in Table~\ref{tab:ssl_sota_in1k}. We can observe that the two-view variant of SAT (a) outperforms all other methods under the same setting by a large margin, and (b) even performs on par with BYOL and MoCov3 trained by 300 epochs while outperforming the other methods with $>2\times$ training epochs.

\begin{table}
  \caption{Transfer learning to Pascal-VOC object detection and semantic segmentation with models pre-trained on ImageNet-1K datasets for 200 epochs. All entries are based on the Faster R-CNN~\cite{ren2015faster} architecture with the ResNet-50 C4 backbone~\cite{wu2019detectron2} for detection, and are based on ResNet-50 FCN~\cite{long2015fully} for segmentation. $\dag$: results cited from~\cite{chen2020exploring}. $*$: reproduction based on the publicly available checkpoint.}
  \label{tab:exp_voc}
  \centering
  \small
  \setlength{\tabcolsep}{1.5mm}{
  \begin{tabular}{@{}lccccc@{}}
    \toprule
    \multirow{2}{*}{Method} & \multirow{2}{*}{\# Views} & \multicolumn{3}{c}{VOC 07+12 Det.} & VOC 12 Seg. \\
     \cmidrule(l{3pt}r{3pt}){3-5} \cmidrule(l{3pt}r{3pt}){6-6}
     & & AP & AP$_{50}$ & AP$_{75}$ & mIoU \\
    \midrule
    Supervised$^\dag$ & 1 & 53.5 & 81.3 & 58.8 & 67.7$^*$ \\
    \midrule
    SimCLR~\cite{chen2020simple}$^\dag$ & 2 & 55.5 & 81.8 & 61.4 & - \\
    MoCov2~\cite{chen2020improved} & 2 & 57.0 & 82.4 & 63.6 & 66.7$^*$ \\
    SwAV~\cite{caron2020unsupervised}$^\dag$ & 2 & 55.4 & 81.5 & 61.4 & - \\
    BYOL~\cite{grill2020bootstrap}$^\dag$ & 2 & 55.3 & 81.4 & 61.1 & - \\
    SimSiam~\cite{chen2020exploring}$^\dag$ & 2 & 56.4 & 82.0 & 62.8 & - \\
    \midrule
    Ours & 1 & 55.2 & 81.5 & 60.9 & 64.8 \\
    Ours & 2 & 56.4 & 82.5 & 63.0 & 68.2 \\
    \bottomrule
  \end{tabular}
  }
\end{table}

Finally, in addition to the above experiments, we also evaluate the pre-trained models on two downstream tasks, object detection and semantic segmentation on the Pascal VOC dataset~\cite{everingham15}. We follow exactly the same settings as in prior works~\cite{he2020momentum,chen2020exploring} for fair comparisons, where all models are pre-trained for 200 epochs on ImageNet. 
We use the Faster R-CNN~\cite{ren2015faster} architecture with the ResNet-50 C4 backbone~\cite{wu2019detectron2} for detection and the ResNet-50 dilated FCN~\cite{long2015fully} architecture for segmentation. The experiments are conducted on the opensource codebases $\operatorname{detectron2}$~\cite{wu2019detectron2}/$\operatorname{mmsegmentation}$~\cite{mmseg2020} for the detection/segmentation tasks.
According to the results in Table~\ref{tab:exp_voc}, self-adaptive training performs on par with or even better than state-of-the-art self-supervised learning methods on both transfer learning tasks. Moreover, we can also observe that SAT consistently achieves superior performance to the supervised baseline.

\begin{table}[t]
\caption{Parameters sensitivity on the CIFAR10 dataset.}
\label{tab:ssl_ablation_param}
\begin{center}
\begin{small}
\setlength{\tabcolsep}{1.5mm}{
\begin{tabular}{lccccc}
\toprule
$\alpha$ & 0.5 & 0.6 & 0.7 & 0.8 & 0.9 \\
\midrule
Fix $\beta = 0.99$ & 91.68 & 91.76 & \textbf{92.27} & 92.08 & 91.68 \\
\midrule\midrule
$\beta$ & 0.9 & 0.95 & 0.99 & 0.995 & 0.999 \\
\midrule
Fix $\alpha = 0.7$ & 90.68 & 91.60 & \textbf{92.27} & 92.16 & 90.92 \\
\bottomrule
\end{tabular}
}
\end{small}
\end{center}
\end{table}

\subsection{Sensitivity of self-supervised self-adaptive training}
\noindent\textbf{Sensitivity to hyper-parameters}\quad
We study how the two momentum parameters $\alpha$ and $\beta$ affect the performance of our approach and report the results in Table~\ref{tab:ssl_ablation_param}. By varying one of the parameters while fixing the other, we observe that self-supervised self-adaptive training performs consistently well, which suggests that our approach is not sensitive to the choice of hyper-parameters.

\medskip
\noindent\textbf{Sensitivity to data augmentation}\quad
Data augmentation is one of the most essential ingredients of recent self-supervised learning methods: a large body of these methods, including ours, formulate the training objective as learning representations that encode the shared information across different views generated by data augmentation. As a result, prior methods, like MoCo and BYOL, fail in the single-view training setting. Self-supervised self-adaptive training, on the other hand, maintains a training target for each image, which contains all historical view information of this image. Therefore, we conjecture that our method should be more robust to data augmentations. To validate our conjecture, we evaluate our method and BYOL under the settings that some of the augmentation operators are removed. The results are shown in Fig.~\ref{fig:ssl_ablation_aug}. Removing any augmentation operators hurts the performance of both methods while our method is less affected. Specifically, when all augmentation operators except random crop and flip are removed, the performance of BYOL drops to 86.6\% while our method still obtains 88.6\% accuracy.

\medskip
\noindent\textbf{Sensitivity to training batch size}\quad
Recent contrastive learning methods require large batch size training (e.g., 1024 or even larger) for optimal performance, due to the need of comparing with massive negative samples. BYOL does not use negative samples and suggests that this issue can be mitigated. Here, since our method also gets rid of the negative samples, we make direct comparisons with BYOL at different batch sizes to evaluate the sensitivity of our method to batch size. For each method, the base learning rate is linearly scaled according to the batch size while the rest settings are kept unchanged. The results are shown in Fig.~\ref{fig:ssl_ablation_bs}. We can see that our method exhibits a smaller performance drop than BYOL at various batch sizes. Concretely, the accuracy of BYOL drops by 2\% at batch size 128 while that of ours drops by only 1\%.

\section{Related Works}

\noindent\textbf{Generalization of deep networks}\quad
Previous work~\cite{zhang2016understanding} systematically analyzed the capability of deep networks to overfit random noise. Their results show that traditional wisdom fails to explain the generalization of deep networks.
Another line of works~\cite{opper1995statistical,opper2001learning,advani2017high,spigler2018jamming,belkin2018reconciling,geiger2019jamming,nakkiran2019deep} observed an intriguing double-descent risk curve from the bias-variance trade-off. \cite{belkin2018reconciling,nakkiran2019deep} claimed that this observation challenges the conventional U-shaped risk curve in the textbook. Our work shows that this observation may stem from overfitting to noise; the phenomenon vanishes by a properly designed training process such as self-adaptive training.
To improve the generalization of deep networks, \cite{szegedy2016rethinking,pereyra2017regularizing} proposed label smoothing regularization that uniformly distributes $\epsilon$ of labeling weight to all classes and uses this soft label for training; \cite{zhang2017mixup} introduced mixup augmentation that extends the training distribution by dynamic interpolations between random paired input images and the associated targets during training.
This line of research is similar to ours as both methods use soft labels in the training. However, self-adaptive training is able to recover true labels from noisy labels and is more robust to noise.

\medskip
\noindent\textbf{Robust learning from corrupted data}\quad
Aside from the preprocessing-training approaches that have been discussed in the last paragraph of Section~\ref{sec:sat_supervised_inst}, there have also been many other works on learning from noisy data. To name a few, \cite{arpit2017closer,li2019gradient} showed that deep neural networks tend to fit clean samples first and overfitting to noise occurs in the later stage of training. \cite{li2019gradient} further proved that early stopping can mitigate the issues that are caused by label noise. \cite{reed2014training,dong2019distillation} incorporated model predictions into training by simple interpolation of labels and model predictions. We demonstrate that our exponential moving average and sample re-weighting schemes enjoy superior performance. Other works~\cite{zhang2018gce,wang2019symmetric} proposed alternative loss functions to cross entropy that are robust to label noise. They are orthogonal to ours and are ready to cooperate with our approach as shown in Appendix~\ref{sec:sat_sce}.
Beyond the corrupted data setting, recent works~\cite{furlanello2018born,xie2020self} propose a self-training scheme that also uses model predictions as training targets. However, they suffer from the heavy cost of multiple iterations of training, which is avoided by our approach. Temporal Ensembling~\cite{laine2017temporal} incorporated the 'ensemble' predictions as pseudo-labels for training. Different from ours, Temporal Ensembling focuses on the semi-supervised learning setting and only accumulates predictions for unlabeled data.

\medskip
\noindent\textbf{Self-supervised learning}\quad
Aiming to learn powerful representations, most self-supervised learning approaches typically first solve a proxy task without human supervision. For example, prior works proposed recovering input using auto-encoder~\cite{vincent2008extracting,pathak2016context}, generating pixels in the input space~\cite{kingma2013auto,goodfellow2014generative}, predicting rotation~\cite{gidaris2018unsupervised} and solving jigsaw~\cite{noroozi2016unsupervised}. Recently, contrastive learning methods~\cite{wu2018unsupervised,oord2018representation,tian2019contrastive,he2020momentum,chen2020simple,li2020prototypical} significantly advanced self-supervised representation learning. These approaches essentially used strong data augmentation techniques to create multiple views (crops) of the same image and discriminated the representation of different views of the same images (i.e., positive samples) from the views of other images (i.e., negative samples). Bootstrap methods eliminated the discrimination of positive and negative data pairs: the works of~\cite{caron2018deep,asano2020self} alternatively performed clustering on the representations and then used the cluster assignments as classification targets to update the model; \cite{caron2020unsupervised} swapped the cluster assignments between the two views of the same image as training targets; \cite{grill2020bootstrap} simply predicted the representation of one view from the other view of the same image; \cite{bojanowski2017unsupervised} formulated the self-supervised training objective as predicting a set of predefined noise. Our work follows the path of bootstrap methods. Going further than them, self-adaptive training is a general training algorithm that bridges supervised and self-supervised learning paradigms. Our approach casts doubt on the necessity of the costly multi-view training and works well with the single-view training scheme.

\section{Conclusion}

In this paper, we explore the possibility of a unified framework to bridge the supervised and self-supervised learning of deep neural networks. We first analyze the training dynamic of deep networks under these two learning settings and observe that useful information from data is distilled to model predictions. The observation occurs broadly even in the presence of data corruptions and the absence of labels, which motivates us to propose Self-Adaptive Training---a general training algorithm that dynamically incorporates model predictions into the training process. We demonstrate that our approach improves the generalization of deep neural networks under various kinds of training data corruption and enhances the representation learning using accumulated model predictions. Finally, we present three applications of self-adaptive training on learning with noisy labels, selective classification, and linear evaluation protocol in self-supervised learning, where our approach significantly advances the state-of-the-art.

\ifCLASSOPTIONcompsoc
  \section*{Acknowledgments}
\else
  \section*{Acknowledgment}
\fi

Lang Huang and Chao Zhang were supported by the National Nature Science Foundation of China under Grant 62071013 and 61671027, and the National Key R\&D Program of China under Grant 2018AAA0100300. Hongyang Zhang was supported in part by an NSERC Discovery Grant.

\ifCLASSOPTIONcaptionsoff
  \newpage
\fi



%


\bibliographystyle{IEEEtran}
\bibliography{main}

%

\begin{IEEEbiography}[{\includegraphics[width=1in,height=1.25in,clip,keepaspectratio]{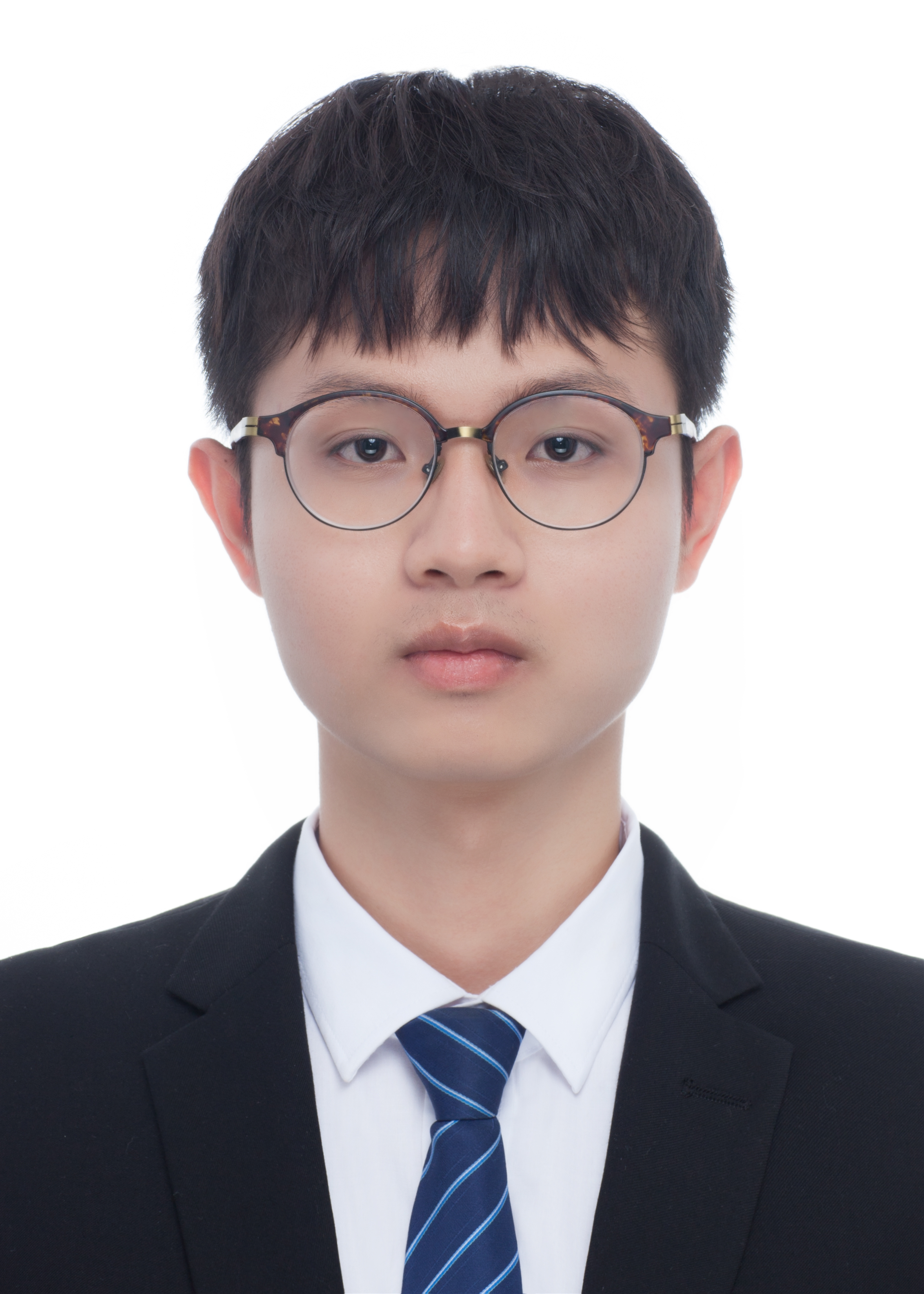}}]{Lang Huang}
received the Master's degree from the Department of Machine Intelligence, School of Electronics Engineering and Computer Science, Peking University in 2021. He is currently a Ph.D. student at the Department of Information \& Communication Engineering, The University of Tokyo. His research interests include self-supervised representation learning, learning from noisy data, and their applications.
\end{IEEEbiography}

\begin{IEEEbiography}[{\includegraphics[width=1in,height=1.25in,clip,keepaspectratio]{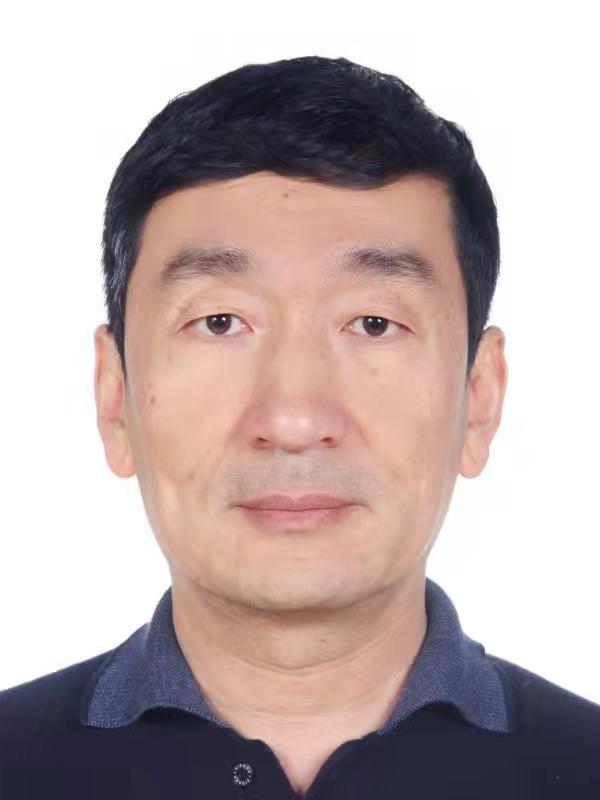}}]{Chao Zhang}
received the Ph.D. degree in Electrical Engineering from Beijing Jiaotong University, Beijing, China in 1995. He is currently a Research Professor at the Key Laboratory of Machine Perception (MOE), School of Intelligence Science and Technology, Peking University. His research interests include computer vision, image processing, machine learning, and pattern recognition.
\end{IEEEbiography}

\begin{IEEEbiography}[{\includegraphics[width=1in,height=1.25in,clip,keepaspectratio]{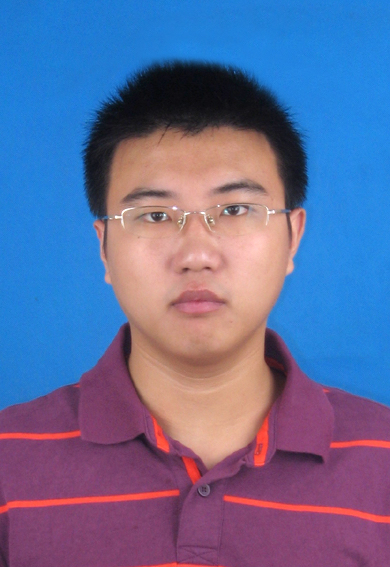}}]{Hongyang Zhang}
received the Ph.D. degree from Machine Learning Department, Carnegie Mellon University in 2019. He is currently an Assistant Professor at David R. Cheriton School of Computer Science, University of Waterloo, and a Faculty affiliated with Vector Institute for AI. Before joining University of Waterloo, he was a Postdoctoral Research Associate with Toyota Technological Institute at Chicago. His research interests include machine learning, AI security, and trustworthy AI.
\end{IEEEbiography}





\vfill


\clearpage

\begin{figure*}[t]
    \centering
    \begin{subfigure}[t]{\textwidth}
        \centering
        \includegraphics[width=\textwidth]{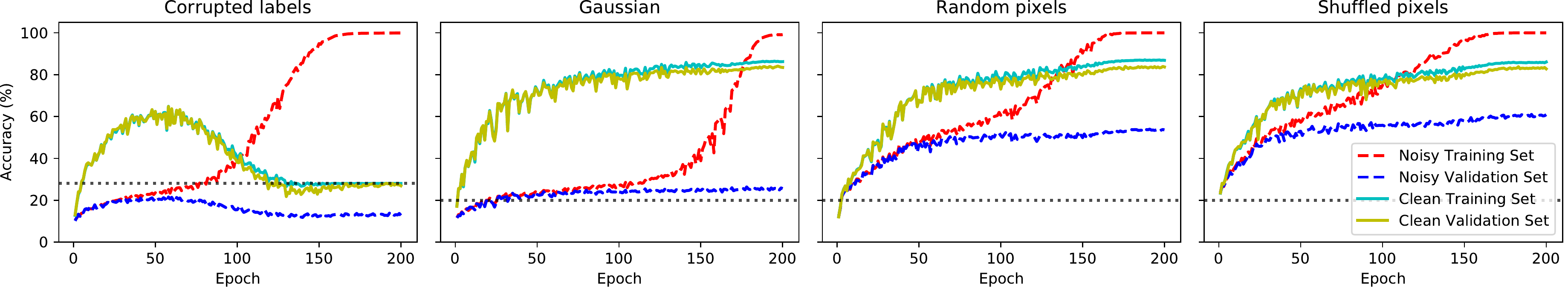}
        \caption{Accuracy curves of the model trained using ERM.}
        \label{fig:ce_acc_curve_r08}
    \end{subfigure}
    \vskip 0.1in
    \begin{subfigure}[t]{\textwidth}
        \centering
        \includegraphics[width=\textwidth]{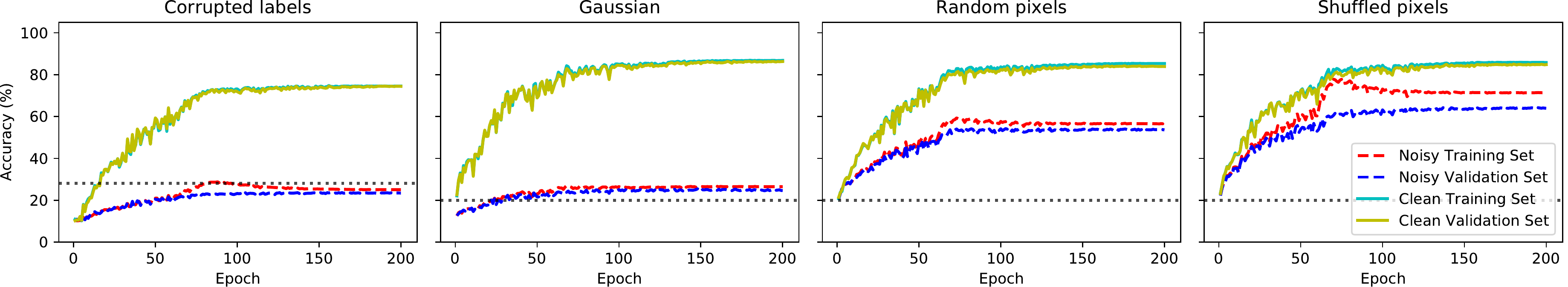}
        \caption{Accuracy curves of the model trained using our method.}
        \label{fig:wsc_acc_curve_r08}
    \end{subfigure}
    \caption{
    Accuracy curves of the model trained on the noisy CIFAR10 training set with a noise rate of 80\%. The horizontal dotted line displays the percentage of clean data in the training sets.
    It shows that our observations in Section~\ref{sec:approach} hold even when extreme label noise is injected.
    }
    \label{fig:acc_curve_r08}
\end{figure*}

\appendices

\setcounter{proposition}{0}
\setcounter{corollary}{0}

\section{Proofs}
\label{sec:proof}
\begin{proposition}
Let $d_{\mathrm{max}}$ be the maximal eigenvalue of the matrix $\X^{\intercal}\X$, if the learning rate $\eta < \frac{\alpha+1}{\alpha d_{\mathrm{max}}}$, then
\begin{align}
    \lim_{s\rightarrow\infty} \norm{\X\bm{\theta}^{(s)} - \bm{t}^{(s)}}_2^2 = 0.
\end{align}
\end{proposition}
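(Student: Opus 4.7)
The plan is to reduce the joint dynamics on $(\bm{\theta}^{(k)},\bm{t}^{(k)})$ to a single linear recurrence on the residual vector
\[
    \bm{r}^{(k)} := \X\bm{\theta}^{(k)} - \bm{t}^{(k)} \in \mathbb{R}^{n},
\]
and then read off convergence from the spectrum of a single explicit matrix.

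First I would substitute the gradient update \eqref{eq:theta_k} into the definition of $\bm{r}^{(k)}$ to get
\[
    \X\bm{\theta}^{(k)} = \X\bm{\theta}^{(k-1)} - \eta\, \X\X^{\intercal}\bm{r}^{(k-1)}.
\]
Then I would plug \eqref{eq:t_k} into $\bm{r}^{(k)}$ and simplify. Using $\bm{t}^{(k)} = \alpha \bm{t}^{(k-1)} + (1-\alpha)\X\bm{\theta}^{(k)}$, the $(1-\alpha)\X\bm{\theta}^{(k)}$ terms cancel and leave
\[
    \bm{r}^{(k)} = \alpha\bigl(\X\bm{\theta}^{(k)} - \bm{t}^{(k-1)}\bigr) = \alpha\bigl(\bm{r}^{(k-1)} - \eta\X\X^{\intercal}\bm{r}^{(k-1)}\bigr) = \alpha\bigl(\I - \eta\X\X^{\intercal}\bigr)\bm{r}^{(k-1)}.
\]
Iterating gives the clean closed form $\bm{r}^{(k)} = \bigl[\alpha(\I - \eta\X\X^{\intercal})\bigr]^{k}\bm{r}^{(0)}$.

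Next I would bound the spectral radius of $M := \alpha(\I - \eta\X\X^{\intercal})$. Since $\X\X^{\intercal}$ is symmetric positive semidefinite with eigenvalues in $[0,d_{\mathrm{max}}]$, the eigenvalues of $M$ are exactly $\{\alpha(1-\eta\lambda) : \lambda \in \mathrm{spec}(\X\X^{\intercal})\}$. For each such $\lambda\in[0,d_{\mathrm{max}}]$ I would check the two inequalities $-1 < \alpha(1-\eta\lambda) < 1$. The upper bound is automatic because $\alpha\in(0,1)$ and $1-\eta\lambda\le 1$. The lower bound rearranges to $\eta\lambda < (\alpha+1)/\alpha$; since this must hold at $\lambda = d_{\mathrm{max}}$, it is exactly equivalent to the hypothesis $\eta < (\alpha+1)/(\alpha d_{\mathrm{max}})$. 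Thus the spectral radius (and, since $M$ is symmetric, also the operator $2$-norm) of $M$ is strictly less than $1$.

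Finally I would conclude by submultiplicativity:
\[
    \norm{\bm{r}^{(k)}}_{2}^{2} \le \norm{M}_{2}^{2k}\,\norm{\bm{r}^{(0)}}_{2}^{2} \longrightarrow 0, \qquad k\to\infty,
\]
which gives Proposition~\ref{prop:conver}, and the same bound with ratio $\norm{M}_{2}^{2} < 1$ immediately yields Corollary~\ref{corol:conver_rate}. There is no real obstacle here; the only care needed is in the bookkeeping that cancels the $(1-\alpha)\X\bm{\theta}^{(k)}$ contribution when substituting \eqref{eq:t_k} into $\bm{r}^{(k)}$, and in distinguishing the cases $\alpha<1$ vs.\ $\alpha=1$ (the stated bound $\eta<(\alpha+1)/(\alpha d_{\mathrm{max}})$ specializes to the familiar $\eta<2/d_{\mathrm{max}}$ gradient-descent condition at $\alpha=1$, which is a useful sanity check).
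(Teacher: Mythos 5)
Your proof is correct and is essentially the paper's argument in streamlined form: the paper diagonalizes $\X\X^{\intercal}=\V^{\intercal}\D\V$ and tracks $\V\bm{t}^{(k)}$ and $\V\X\bm{\theta}^{(k)}$ as separate sequences before subtracting them, which yields exactly your recurrence $\bm{r}^{(k)}=\alpha(\I-\eta\X\X^{\intercal})\bm{r}^{(k-1)}$ in diagonalized coordinates, and both proofs conclude from the eigenvalues of $\alpha(\I-\eta\X\X^{\intercal})$ lying strictly inside $(-1,1)$ precisely when $\eta<\frac{\alpha+1}{\alpha d_{\mathrm{max}}}$. Your direct cancellation of the $(1-\alpha)\X\bm{\theta}^{(k)}$ term avoids the paper's coordinate bookkeeping but is otherwise the same argument.
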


\begin{proof}
Inserting Equation~\eqref{eq:theta_k} into \eqref{eq:t_k}, we have
\begin{align}
\begin{split}
\bm{t}^{(k)} & = \alpha\bm{t}^{(k-1)} + (1 - \alpha)\X\bm{\theta}^{(k)} \\
 & = \alpha\bm{t}^{(k-1)} \\
 &\quad~+ (1 - \alpha)\X[\bm{\theta}^{(k-1)} - \eta\X^{\intercal}(\X\bm{\theta}^{(k-1)} - \bm{t}^{(k-1)})] \\
 &= [\alpha\I+(1-\alpha)\eta\X\X^{\intercal}]\bm{t}^{(k-1)} \\
 &\quad~+ (1-\alpha)(\I - \eta\X\X^{\intercal})\X\bm{\theta}^{(k-1)}.
\label{eq:t_k_expand}
\end{split}
\end{align}

Note that $\X\X^{\intercal}$ is positive semi-definite and can be diagonalized as $\X\X^{\intercal} = \V^{\intercal}\D\V$, where the diagonal matrix $\D$ contains the eigenvalue of $\X^{\intercal}\X$ and the matrix $\V$ contains the corresponding eigenvectors, $\V\V^{\intercal}=\I$. And let $\bm{r}^{(k)}=\V\bm{t}^{(k)},\bm{s}^{(k)}=\V\X\bm{\theta}^{(k)}$. Multiplying both sides of Equation~\eqref{eq:t_k_expand} by $\V$, we have
\begin{align}
\begin{split}
\label{eq:r_k}
\V\bm{t}^{(k)} &= \V[\alpha\V^{\intercal}\V+(1-\alpha)\eta\V^{\intercal}\D\V]\bm{t}^{(k-1)} \\
               &\quad~+ (1-\alpha)\V(\V^{\intercal}\V - \eta\V^{\intercal}\D\V)\X\bm{\theta}^{(k-1)} \\
\V\bm{t}^{(k)} &= [\alpha\I+(1-\alpha)\eta\D]\V\bm{t}^{(k-1)} \\
    &\quad~+ (1-\alpha)(\I - \eta\D)\V\X\bm{\theta}^{(k-1)} \\
\bm{r}^{(k)} &= [\alpha\I+(1-\alpha)\eta\D]\bm{r}^{(k-1)} \\
    &\quad~+ (1-\alpha)(\I - \eta\D)\bm{s}^{(k-1)}.
\end{split}
\end{align}
From Equation~\eqref{eq:theta_k}, we have
\begin{align}
\begin{split}
\label{eq:s_k}
\bm{s}^{(k)} &= \V\X\bm{\theta}^{(k)} \\
 &= \V\X[\bm{\theta}^{(k-1)} - \eta\X^{\intercal}(\X\bm{\theta}^{(k-1)} - \bm{t}^{(k-1)})] \\
 &= \bm{s}^{(k-1)} - \eta\D(\bm{s}^{(k-1)} - \bm{r}^{(k-1)}) \\
 &= \eta\D\bm{r}^{(k-1)} + (\I - \eta\D)\bm{s}^{(k-1)}.
\end{split}
\end{align}
Subtracting the the both sides of Equation~\eqref{eq:s_k} by $\bm{r}^{(k)}$, we obtain
\begin{align}
\begin{split}
\label{eq:recurr}
\bm{s}^{(k)} - \bm{r}^{(k)} &= \alpha(\I - \eta\D)(\bm{s}^{(k-1)} - \bm{r}^{(k-1)}) \\
 &= [\alpha(\I - \eta\D)]^{k}(\bm{s}^{(0)} - \bm{r}^{(0)}) \\
 &= \A^{k}\V\bm{b},
\end{split}
\end{align}
where $\A=\alpha(\I - \eta\D), \bm{b}=\X\bm{\theta}^{(0)}-\bm{t}^{(0)}$. Therefore,
\begin{align}
\begin{split}
\X\bm{\theta}^{(k)} - \bm{t}^{(k)} &= \V^{\intercal}\A^{k}\V\bm{b}.
\end{split}
\end{align}
Because $\X\X^{\intercal}$ is positive semi-definite and $\alpha\in (0, 1)$, all elements in $\A=\alpha(\I - \eta\D)$ is smaller than 1. When $0 < \eta < \frac{\alpha+1}{\alpha d_{\mathrm{max}}}$, each elements of the diagonal matrix $\A$ is greater than -1, and we have
\begin{align}
\begin{split}
\lim_{k\rightarrow\infty} \norm{\X\bm{\theta}^{(k)} - \bm{t}^{(k)}}_2^2 &= \bm{b}^{\intercal}\V^{\intercal}\A^{2k}\V\bm{b} \\
    &= 0.
\end{split}
\end{align}

\end{proof}

\begin{corollary}
Under the same condition as in Proposition~\ref{prop:conver}, we have
\begin{align}
    \lim_{k\rightarrow\infty} \frac{\norm{\X\bm{\theta}^{(k+1)} - \bm{t}^{(k+1)}}_2^2}{\norm{\X\bm{\theta}^{(k)} - \bm{t}^{(k)}}_2^2} = a_i^2 < 1,
\end{align}
\end{corollary}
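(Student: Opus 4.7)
The plan is to reuse the diagonalization that was set up in the proof of Proposition~\ref{prop:conver} and then read off the asymptotic ratio directly. Recall that there we derived $\X\bm{\theta}^{(k)} - \bm{t}^{(k)} = \V^{\intercal}\A^{k}\V\bm{b}$, where $\A = \alpha(\I - \eta\D)$ is diagonal, $\D = \mathrm{diag}(d_1,\ldots,d_n)$ collects the eigenvalues of $\X\X^{\intercal}$, and $\bm{b} = \X\bm{\theta}^{(0)} - \bm{t}^{(0)}$. Since $\V$ is orthogonal, setting $\bm{c} = \V\bm{b}$ and $a_i = \alpha(1 - \eta d_i)$ gives the clean scalar expression
\begin{equation*}
    \norm{\X\bm{\theta}^{(k)} - \bm{t}^{(k)}}_2^2 \;=\; \bm{c}^{\intercal}\A^{2k}\bm{c} \;=\; \sum_{i} a_i^{2k}\, c_i^2.
\end{equation*}

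First, I would verify that the learning rate condition $\eta < (\alpha+1)/(\alpha d_{\max})$ (together with $\eta > 0$ and $\alpha \in (0,1)$) ensures $|a_i| < 1$ for every $i$: the upper bound gives $a_i > -1$, while $\eta d_i \ge 0$ combined with $\alpha < 1$ gives $a_i < 1$. Next, I would form the ratio
\begin{equation*}
    \frac{\norm{\X\bm{\theta}^{(k+1)} - \bm{t}^{(k+1)}}_2^2}{\norm{\X\bm{\theta}^{(k)} - \bm{t}^{(k)}}_2^2} \;=\; \frac{\sum_{i} a_i^{2(k+1)} c_i^2}{\sum_{i} a_i^{2k} c_i^2}.
\end{equation*}
To read off the limit, let $a^{\star} = \max\{|a_i| : c_i \neq 0\}$ and factor $a^{\star\,2k}$ out of numerator and denominator. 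All terms with $|a_i| < a^{\star}$ vanish in the limit, and the ratio tends to $(a^{\star})^{2}$, which is some $a_i^2 < 1$ in the notation of the statement. Combined with Proposition~\ref{prop:conver}, this gives the $Q$-linear convergence claimed in the main text.

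The only genuinely delicate step is the degenerate case where $\bm{c} = \V\bm{b} = \bm{0}$, in which case both numerator and denominator are identically zero and the ratio is undefined; I would simply note that in this situation the iterates already satisfy $\X\bm{\theta}^{(k)} = \bm{t}^{(k)}$ from $k = 0$ on, so the corollary is vacuous. A second, more technical subtlety is that $\max_i |a_i|$ may be attained at more than one index, but this does not affect the limit since we only need the common factor $a^{\star\,2k}$ to cancel. I expect no algebraic obstacle beyond the index bookkeeping; the main point is really that the diagonal recurrence from Proposition~\ref{prop:conver} already exhibits each coordinate decaying geometrically at rate $a_i^2$, and the slowest such rate governs the ratio in the limit.
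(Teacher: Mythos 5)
Your proposal is correct and takes essentially the same route as the paper: both reuse the diagonalized error recursion $\X\bm{\theta}^{(k)} - \bm{t}^{(k)} = \V^{\intercal}\A^{k}\V\bm{b}$ from the proof of Proposition~\ref{prop:conver} and obtain the limit by cancelling the dominant geometric factor $a_i^{2k}$. If anything, you are more careful than the published proof, which tacitly assumes the component $(\V\bm{b})_i$ is nonzero at the index of maximal $|\A_{jj}|$; your restriction of the maximum to indices with $c_i \neq 0$, your remark on ties, and your treatment of the degenerate case $\V\bm{b} = \bm{0}$ fill in exactly the details the paper leaves implicit.
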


\begin{proof}
Let $a_i$ be the element of the diagonal matrix $\A$ that has the maximal absolute value, where $i$ is its index on $\A$, i.e., $i = \argmax_j |\A_{jj}|, |a_i| < 1$.
Then, we have
\begin{align}
\begin{split}
    & \lim_{k\rightarrow\infty} \frac{\norm{\X\bm{\theta}^{(k+1)} - \bm{t}^{(k+1)}}_2^2}{\norm{\X\bm{\theta}^{(k)} - \bm{t}^{(k)}}_2^2} \\
    = & \lim_{k\rightarrow\infty} \frac{\bm{b}^{\intercal}\V^{\intercal}\A^{2k+2}\V\bm{b}}{\bm{b}^{\intercal}\V^{\intercal}\A^{2k}\V\bm{b}} \\
    = & \frac{a_{i}^{2k+2}(\V\bm{b})_{i}^2}{a_{i}^{2k}(\V\bm{b})_{i}^2} \\
    = & a_i^2 < 1.
\end{split}
\end{align}
\end{proof}

\section{Experimental Setups}
\subsection{Double descent phenomenon}
\label{sec:setup_dd}
Following previous work~\cite{nakkiran2019deep}, 
we optimize all models using Adam~\cite{kingma2014adam} optimizer
with a fixed learning rate of 0.0001, a batch size of 128, the common data augmentations, and a weight decay of 0 for 4,000 epochs.
For our approach, we use the hyper-parameters $\mathrm{E}_s=40, \alpha=0.9$ for standard ResNet-18 (width of 64) and dynamically adjust them for other models according to the relation of model capacity $r=\frac{64}{\mathrm{width}}$ as:
\begin{equation}
\label{eq:dd_params}
    \mathrm{E}_s = 40\times r;\quad \alpha = 0.9^{\frac{1}{r}}.
\end{equation}

\subsection{Adversarial training}
\label{sec:setup_adv}
\cite{szegedy2013intriguing} reported that imperceptible small perturbations around input data (i.e., adversarial examples) can cause ERM-trained deep neural networks to make arbitrary predictions.
Since then, a large literature devoted to improving the adversarial robustness of deep neural networks.
Among them, the adversarial training algorithm TRADES \cite{zhang2019theoretically} achieves state-of-the-art performance.
TRADES decomposed robust error (w.r.t adversarial examples) to the sum of natural error and boundary error, and proposed to minimize:
\begin{equation}
\label{eq:trades_appendix}
\mathbb{E}_{\x,\y}\Bigg\{ \mathrm{CE}(\p(\x), \y) + \max_{\|\widetilde{\x}-\x\|_\infty\le\epsilon}\mathrm{KL}(\p(\x), \p(\widetilde{\x}))/\lambda\Bigg\},
\end{equation}
where $\p(\cdot)$ is the model prediction, $\epsilon$ is the maximal perturbation, CE stands for cross entropy, and KL stands for Kullback–Leibler divergence.
The first term corresponds to ERM that maximizes the natural accuracy; the second term pushes the decision boundary away from data points to improve adversarial robustness; the hyper-parameter $1/\lambda$ controls the trade-off between natural accuracy and adversarial robustness.
We evaluate self-adaptive training on this task by replacing the first term of Equation~\eqref{eq:trades_appendix} with our approach.

Our experiments are based on the official open-sourced implementation\footnote{\url{https://github.com/yaodongyu/TRADES}} of TRADES~\cite{zhang2019theoretically}.
Concretely, we conduct experiments on the CIFAR10 dataset \cite{krizhevsky2009cifar} and use WRN-34-10~\cite{zagoruyko2016wide} as base classifier.
For training, we use an initial learning rate of 0.1, a batch size of 128, and 100 training epochs. The learning rate is decayed at the 75th and 90th epoch by a factor of 0.1.
The adversarial example $\widetilde{\x}_i$ is generated dynamically during training by projected gradient descent (PGD) attack \cite{madry2017towards} with a maximal $\ell_{\infty}$ perturbation $\epsilon$ of 0.031, perturbation step size of 0.007, number of perturbation steps of 10.
The hyper-parameter $1/\lambda$ of TRADES is set to 6 as suggested by the original paper, $\mathrm{E}_s, \alpha$ of our approach is set to 70, 0.9, respectively.
For evaluation, we report robust accuracy $\frac{1}{n}\sum_i \mathbbm{1}\{ \argmax~p(\widetilde{\x}_i)=\argmax~\y_i \}$, where adversarial example $\widetilde{\x}$ is generated by two kinds of white box $\ell_{\infty}$ attacks with $\epsilon$ of 0.031: 1) AutoAttack~\cite{croce2020reliable} (as in Fig.~\ref{fig:robust_acc}); 2) untargeted PGD attack (as in Fig.~\ref{fig:robust_acc_pgd}), with a perturbation step size of 0.007, the number of perturbation steps of 20.

\begin{figure}[t]
    \centering
    \includegraphics[width=\linewidth]{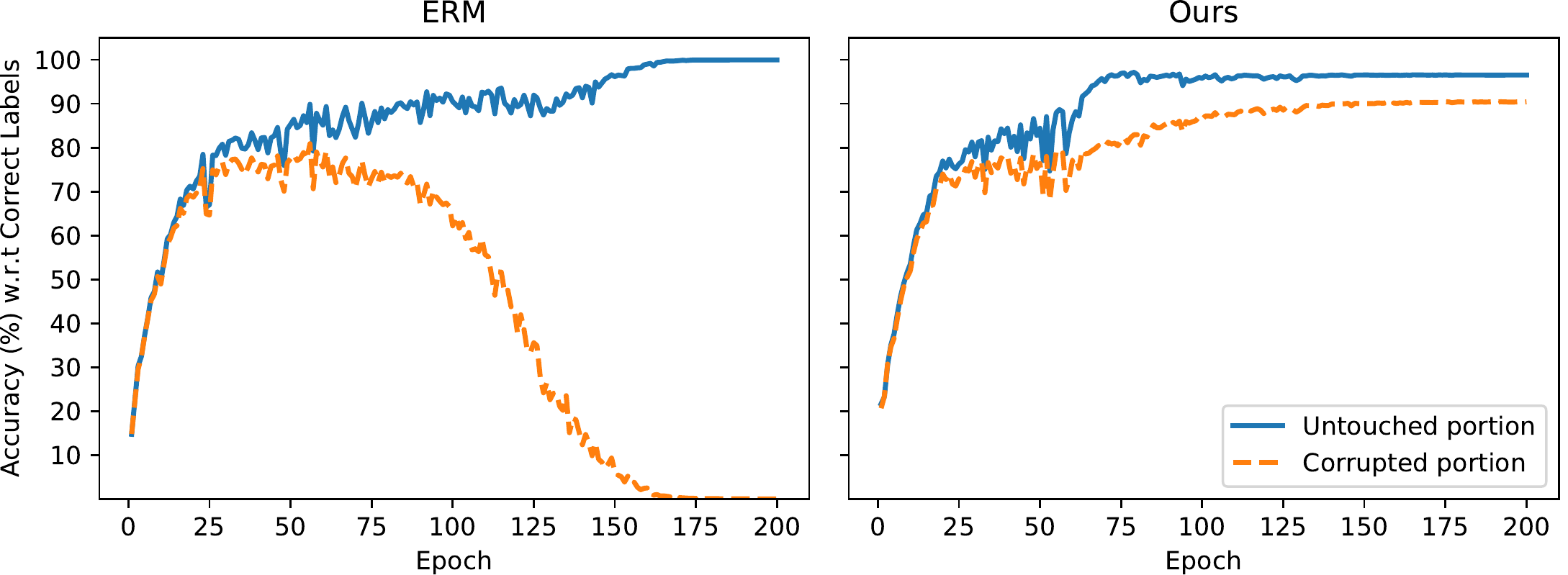}
    \caption{
    Accuracy curves on different portions of the CIFAR10 training set (with 40\% of label noise) w.r.t. correct labels. We split the training set into two portions:
    1)~\emph{Untouched portion}, i.e., the elements in the training set which were left untouched; 
    2)~\emph{Corrupted portion}, i.e., the elements in the training set which were indeed randomized. It shows that ERM fits correct labels in the first few epochs and then eventually overfits the corrupted labels. In contrast, self-adaptive training calibrates the training process and consistently fits the correct labels.
    }
    \label{fig:split_acc_curve_r04}
\end{figure}

\begin{figure*}[t]
\centering
\includegraphics[width=\textwidth]{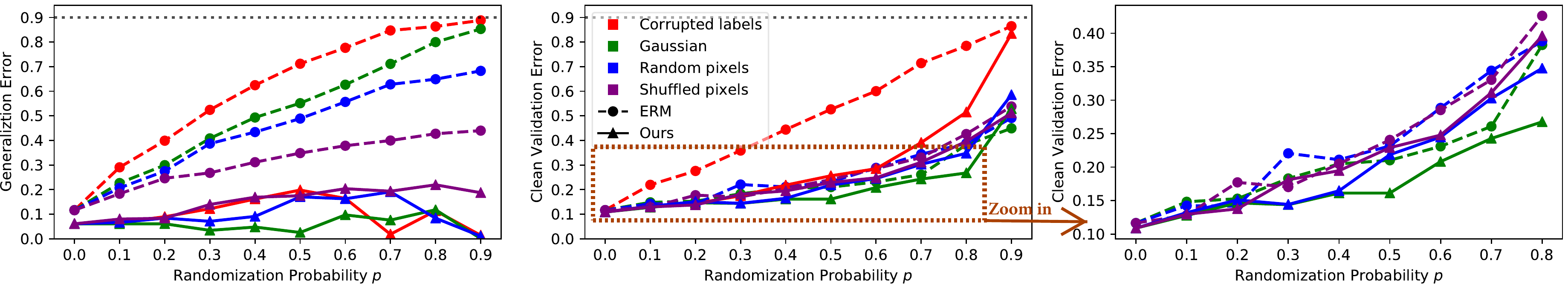}
\caption{
Generalization error and clean validation error under four kinds of random noise (represented by different colors) for ERM (the dashed curves) and our approach (the solid curves) on CIFAR10 when data augmentation is turned off. We zoom in on the dashed rectangle region and display it in the third column for a clear demonstration.
}
\label{fig:gen_clean_errs_wo_aug}
\end{figure*}

\subsection{Supervised learning on ImageNet}
\label{sec:setup_imagenet}
We use ResNet-50/ResNet-101~\cite{he2016deep} as base classifier.
Following the original papers~\cite{he2016deep} and~\cite{loshchilov2016sgdr,goyal2017accurate}, we use SGD to optimize the networks with a batch size of 768, a base learning rate of 0.3, a momentum of 0.9, a weight decay of 0.0005, and a total training epoch of 95.
The learning rate is linearly increased from 0.0003 to 0.3 in the first 5 epochs (i.e., warmup), and then decayed using the cosine annealing schedule~\cite{loshchilov2016sgdr} to 0.
Following common practice, we use the random resizing, cropping, and
flipping augmentations during training.
The hyper-parameters of our approach are set to $\mathrm{E}_s = 50$ and $\alpha=0.99$ under standard setup, and are set to $\mathrm{E}_s = 60$ and $\alpha = 0.95$ under 40\% label noise setting. The experiments are conducted on PyTorch~\cite{paszke2019pytorch} with distributed training and mixed-precision training\footnote{\url{https://github.com/NVIDIA/apex}} for acceleration.

\subsection{Linear classifier during self-supervised training}
\label{sec:setup_online_lin_cls}
For the horizontal line in Fig.~\ref{fig:ssl_acc_at_training}, we randomly initialize a network and train a linear classifier atop this network. The horizontal line corresponds to the \emph{final} accuracy of this linear classifier. For the other three curves in Fig.~\ref{fig:ssl_acc_at_training}, we adopt an online classifier scheme, following the official implementation of BOYL~\cite{grill2020bootstrap}. Concretely, in the self-supervised pre-training stage, we train 1) a network to fit the output of another randomly initialized network, and 2) a linear classifier on top of the network whose gradient will not backpropagate to the network. This scheme bypasses the heavy cost to train a linear classifier from scratch at each training epoch and allows us to directly evaluate the linear classifier on the validation set. In practice, we use a separate learning rate for this classier, i.e., 0.4 in our experiments.

\subsection{Training time comparison}
\label{sec:setup_ssl_training_time}
In this comparison, we calculate the running time for all methods using a server with 8 V100 GPUs, a 40-core CPU, CUDA10.1, and PyTorch1.8. For MoCov2, we follow the official implementation using a batch size of 256 and ShuffleBN~\cite{he2020momentum}. For the BYOL, we use SyncBn and a batch size of 512 to fit into a single machine. For our method, we use SyncBN and a batch size of 1024 so that the number of augmented views (1024x1) in each batch is identical to that of BYOL (512x2). For the supervised cross entropy baseline, we use the standard BN and a batch size of 256 following the common practice. According to Fig.~\ref{fig:batch_time}, we can see that, though data preprocessing is not the bottleneck on a modern machine, our method is still ~2x/1.3x faster than BYOL/MoCov2.

\section{Additional Experimental Results}
\label{sec:extra_exp}
\subsection{ERM may suffer from overfitting of noise}

In~\cite{zhang2016understanding}, the authors showed that
the model trained by standard ERM can easily fit randomized data. However, they only analyzed the generalization errors in the presence of corrupted labels. In this paper, we report the whole training process and also consider the performance on clean sets (i.e., the original uncorrupted data). Fig.~\ref{fig:ce_acc_curve} shows the four accuracy curves (on clean and noisy training, validation set, respectively) for each model that is trained on one of four corrupted training data. Note that the models can only have access to the noisy training sets (i.e., the red curve), and the other three curves are shown only for illustration purposes. We conclude with two principal observations from the figures:
(1) The accuracy on noisy training and validation sets is close at the beginning and the gap is monotonously increasing w.r.t. epoch. The generalization errors (i.e., the gap between the accuracy on noisy training and validation sets) are large at the end of training.
(2) The accuracy on the clean training and validation sets is consistently higher than the percentage of clean data in the noisy training set. This occurs around the epochs between underfitting and overfitting.

\begin{figure}[t]
\centering
\includegraphics[width=.8\linewidth]{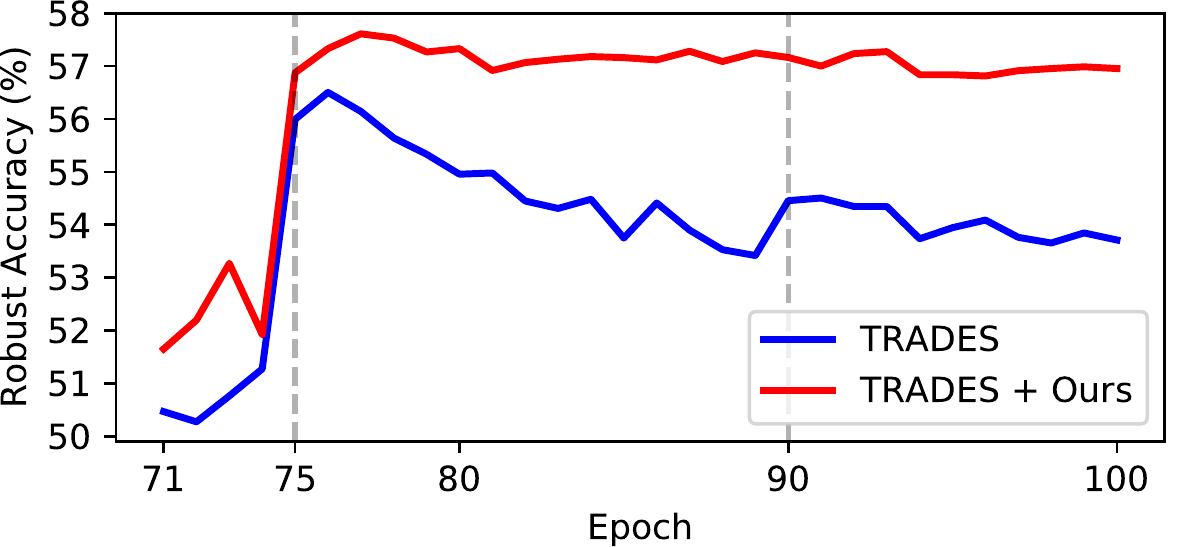}
\caption{
Robust Accuracy (\%) on CIFAR10 test set under white box $\ell_\infty$ PGD-20 attack ($\epsilon$=0.031). The vertical dashed lines indicate learning rate decay. It shows that self-adaptive training consistently improves TRADES.
}
\label{fig:robust_acc_pgd}
\end{figure}

Our first observation poses concerns on the overfitting issue of ERM training dynamic which has also been reported by~\cite{li2019gradient}. However, the work of~\cite{li2019gradient} only considered the case of corrupted labels and proposed using the early-stop mechanism to improve the performance on the clean data. On the other hand, our analysis of the broader corruption schemes shows that the early stopping might be sub-optimal and may hurt the performance under other types of corruption (see the last three columns in Fig.~\ref{fig:ce_acc_curve}).

The second observation implies that model predictions by ERM can capture and amplify useful signals in the noisy training set, although the training dataset is heavily corrupted. While this was also reported in~\cite{zhang2016understanding,rolnick2017deep,guan2018said,li2019gradient} for the case of corrupted labels, we show that a similar phenomenon occurs under other kinds of corruption more generally. This observation sheds light on our approach, which incorporates model predictions into the training procedure.

\subsection{Improved generalization of self-adaptive training on random noise}

\noindent\textbf{Training accuracy w.r.t. correct labels on different portions of data}\quad
For a more intuitive demonstration, we split the CIFAR10 training set (with 40\% label noise) into two portions:
1)~\emph{Untouched portion}, i.e., the elements in the training set which were left untouched;
2)~\emph{Corrupted portion}, i.e., the elements in the training set which were indeed randomized.
The accuracy curves on these two portions w.r.t correct training labels are shown in Fig.~\ref{fig:split_acc_curve_r04}.
We can observe that the accuracy of ERM on the corrupted portion first increases in the first few epochs and then eventually decreases to 0. In contrast, self-adaptive training calibrates the training process and consistently fits the correct labels.

\begin{figure}[t]
    \centering
    \includegraphics[width=.8\linewidth]{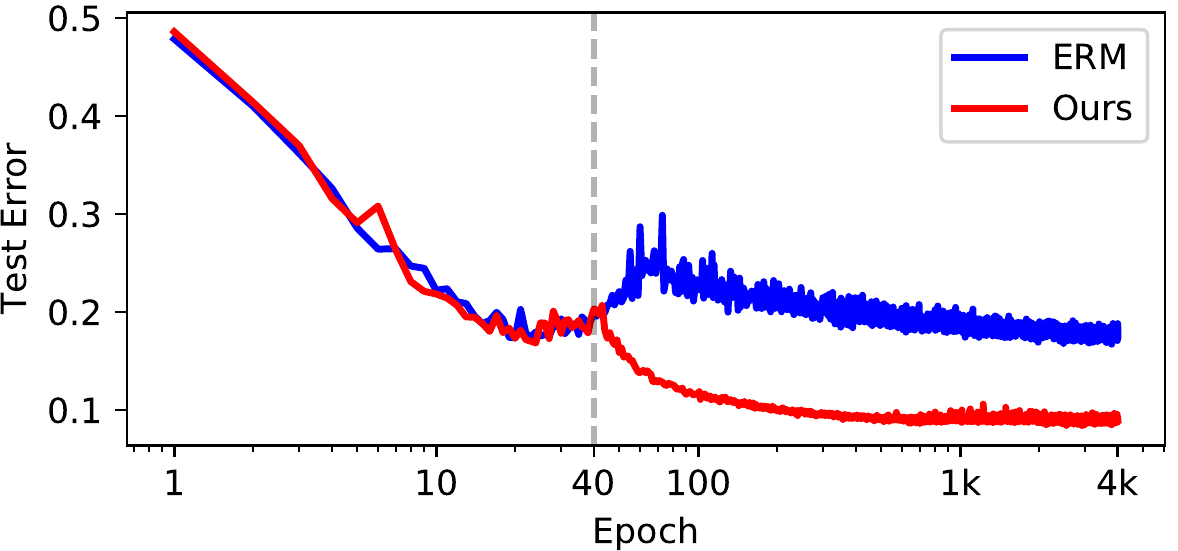}
    \caption{Self-adaptive training \emph{vs.} ERM on the error-epoch curve.
    We train the standard ResNet-18 networks (i.e., the width of 64) on the CIFAR10 dataset with 15\% randomly-corrupted labels and report the test errors on the clean data.
    The dashed vertical line represents the initial epoch $\mathrm{E}_s$ of our approach.
    It shows that self-adaptive training has a significantly diminished epoch-wise double-descent phenomenon.
    }
    \label{fig:epochwise_dd}
\end{figure}

\begin{table*}[t]
\caption{Test Accuracy (\%) on CIFAR datasets with various levels of uniform label noise injected into the training set.
We show that considerable gains can be obtained when combined with SCE loss.}
\label{tab:noisy_cls_ours_sce}
\begin{center}
\begin{small}
\begin{tabular}{lcccccccc}
\toprule
 & \multicolumn{4}{c}{CIFAR10} & \multicolumn{4}{c}{CIFAR100} \\
 \cmidrule(l{3pt}r{3pt}){2-5} \cmidrule(l{3pt}r{3pt}){6-9}
Label Noise Rate  & 0.2 & 0.4 & 0.6 & 0.8 & 0.2 & 0.4 & 0.6 & 0.8   \\
\midrule
SCE~\cite{wang2019symmetric} & 90.15 & 86.74 & 80.80 & 46.28 & 71.26 & 66.41 & 57.43 & 26.41\\
Ours    & 94.14 & 92.64 & 89.23 & 78.58 & 75.77 & 71.38 & 62.69 & 38.72 \\
Ours + SCE & \textbf{94.39} & \textbf{93.29} & \textbf{89.83} & \textbf{79.13} & \textbf{76.57} & \textbf{72.16} & \textbf{64.12} & \textbf{39.61} \\
\bottomrule
\end{tabular}
\end{small}
\end{center}
\end{table*}

\medskip
\noindent\textbf{Study on extreme noise}\quad
We further rerun the same experiments as in Fig.~\ref{fig:acc_curve} of the main text by injecting extreme noise (i.e., noise rate of 80\%) into the CIFAR10 dataset. We report the corresponding accuracy curves in Fig.~\ref{fig:acc_curve_r08}, which shows that our approach significantly improves the generalization over ERM even when random noise dominates training data. This again justifies our observations in Section~\ref{sec:approach}.

\medskip
\noindent\textbf{Effect of data augmentation}\quad
All our previous studies are performed with common data augmentation (i.e., random cropping and flipping). Here, we further report the effect of data augmentation. We adjust the introduced hyper-parameters as $\mathrm{E}_s=25$, $\alpha=0.7$ due to severer overfitting when data augmentation is absent. Fig.~\ref{fig:gen_clean_errs_wo_aug} shows the corresponding generalization errors and clean validation errors. We observe that, for both ERM and our approach, the errors clearly increase when data augmentation is absent (compared with those in Fig.~\ref{fig:gen_clean_errs}). However, the gain is limited and the generalization errors can still be very large, with or without data augmentation for standard ERM. Directly replacing the standard training procedure with our approach can bring bigger gains in terms of generalization regardless of data augmentation. This suggests that data augmentation can help but is not of the essence to improving the generalization of deep neural networks, which is consistent with the observation in~\cite{zhang2016understanding}.

\subsection{Epoch-wise double descent phenomenon}
\cite{nakkiran2019deep} reported that, for a sufficiently large model, the test error-training epoch curve also exhibits a double-descent phenomenon,
which they termed \emph{epoch-wise double descent}.
In Fig.~\ref{fig:epochwise_dd}, we reproduce the epoch-wise double descent phenomenon on ERM
and inspect self-adaptive training.
We observe that our approach (the red curve) exhibits a slight double-descent
due to the overfitting starts before the initial $\mathrm{E}_s$ epochs.
As the training targets being updated (i.e., after $\mathrm{E}_s$ = 40 training epochs),
the red curve undergoes a monotonous decrease.
This observation again indicates that the double-descent phenomenon 
may stem from the overfitting of noise and can be avoided by our algorithm.

\subsection{Cooperation with Symmetric Cross Entropy}
\label{sec:sat_sce}
\cite{wang2019symmetric} showed that Symmetric Cross Entropy (SCE) loss
is robust to underlying label noise in training data.
Formally, given training target $\bm{t}_i$ and model prediction $\p_i$,
SCE loss is defined as:
\begin{equation}
    \mathcal{L}_{sce} = -w_1 \sum_j \bm{t}_{i,j}~\log~\p_{i,j} - w_2 \sum_j \bm{p}_{i,j}~\log~\bm{t}_{i,j},
\end{equation}
where the first term is the standard cross entropy loss
and the second term is the reversed version.
In this section, we show that self-adaptive training can cooperate with
this noise-robust loss and enjoy further performance boost without extra cost.

\medskip
\noindent\textbf{Setup}\quad
Most of the experimental settings are kept the same as Section~\ref{sec:exp_label_noise}.
For the introduced hyper-parameters $w_1, w_2$ of SCE loss,
we directly set them to 1, 0.1, respectively, in all our experiments.

\medskip
\noindent\textbf{Results}\quad
We summarize the results in Table~\ref{tab:noisy_cls_ours_sce}. 
We can see that, although self-adaptive training already achieves very strong performance, considerable gains can be obtained when equipped with SCE loss. Concretely, the improvement is as large as 1.5\% when 60\% of label noise is injected into the CIFAR100 training set.
It also indicates that our approach is flexible and can be further extended.

\begin{table}[t]
\caption{Average Accuracy (\%) on the CIFAR10 test set and out-of-distribution dataset CIFAR10-C at various corruption levels.}
\label{tab:cifar10c}
\begin{center}
\begin{small}
\begin{tabular}{lcccccc}
\toprule
\multirow{2}{*}{Method} & \multirow{2}{*}{CIFAR10} & \multicolumn{5}{c}{Corruption Level@CIFAR10-C} \\
\cmidrule{3-7}
 & & 1 & 2 & 3 & 4 & 5 \\
\midrule
ERM     & 95.32 & 88.44 & 83.22 & 77.26 & 70.40 & 58.91 \\
Ours    & \textbf{95.80} & \textbf{89.41} & \textbf{84.53} & \textbf{78.83} & \textbf{71.90} & \textbf{60.77} \\
\bottomrule
\end{tabular}
\end{small}
\end{center}
\end{table}

\subsection{Out-of-distribution generalization}
In this section, we consider the out-of-distribution (OOD) generalization, where the models are evaluated on unseen test distributions outside the training distribution. 

\medskip
\noindent\textbf{Setup}\quad
To evaluate the OOD generalization performance, we use the CIFAR10-C benchmark~\cite{hendrycks2018benchmarking} that is constructed by applying 15 types of corruption to the original CIFAR10 test set at 5 levels of severity. The performance is measured by average accuracy over 15 types of corruption. We mainly follow the training details in Section~\ref{sec:exp_label_noise} and adjust $\alpha=0.95, \mathrm{E}_s=80$.

\medskip
\noindent\textbf{Results}\quad
We summarize the results in Table~\ref{tab:cifar10c}. Regardless of the presence of corruption and corruption levels, our method consistently outperforms ERM by a considerable margin, which becomes large when the corruption is more severe. The experiment indicates that self-adaptive training may provide implicit regularization for OOD generalization.

\end{document}